\documentclass[twoside]{article}

%\usepackage{aistats2019}
% If your paper is accepted, change the options for the package
% aistats2019 as follows:
%
\usepackage[accepted]{aistats2019}
%
% This option will print headings for the title of your paper and
% headings for the authors names, plus a copyright note at the end of
% the first column of the first page.

% If you set papersize explicitly, activate the following three lines:
%\special{papersize = 8.5in, 11in}
%\setlength{\pdfpageheight}{11in}
%\setlength{\pdfpagewidth}{8.5in}

% If you use natbib package, activate the following three lines:
%\usepackage[round]{natbib}
%\renewcommand{\bibname}{References}
%\renewcommand{\bibsection}{\subsubsection*{\bibname}}

% If you use BibTeX in apalike style, activate the following line:
%\bibliographystyle{apalike}
\usepackage{hyperref}       % hyperlinks
\usepackage{url}            % simple URL typesetting
\usepackage{subfigure}
%%%%%%%%%%%%%%%% user macros %%%%%%%%%%%%%%%%%%%%%%%%%%

\newcommand{\loss}{\ell}
\newcommand{\Loss}{\mathcal{L}}

\newcommand{\D}{\mathcal{D}}
\newcommand{\F}{\mathcal{F}}
\newcommand{\G}{\mathcal{G}}
\newcommand{\Dtr}{D_{\mathrm{tr}}}
\newcommand{\ntr}{n_{\mathrm{tr}}}
\newcommand{\Mtr}{M_{\mathrm{tr}}}
\newcommand{\Str}{S_{\mathrm{tr}}}
\newcommand{\Xtr}{X_{\mathrm{tr}}}
\newcommand{\Ltr}{L_{\mathrm{tr}}}
\newcommand{\tOtr}{\tilde{\Omega}_{\mathrm{tr}}}
\newcommand{\Err}{\mathcal{E}}
\newcommand{\bErr}{\bar{\mathcal{E}}}
\newcommand{\hErr}{\hat{\mathcal{E}}}

\newcommand{\uni}{\mathcal{U}}
\usepackage{graphicx}
\usepackage{float}
\usepackage{amsmath}
\usepackage{amsthm}
\usepackage{amsfonts}
\usepackage{hyperref}
\usepackage{natbib}
\usepackage{algorithm}
\usepackage{algorithmic}
\usepackage{wrapfig}
\allowdisplaybreaks

%%%%%%%%%%%%%%%%%%%%%%%%%%%%%%%%
% THEOREMS
%%%%%%%%%%%%%%%%%%%%%%%%%%%%%%%%

\theoremstyle{plain}
\newtheorem{theorem}{Theorem}
\newtheorem{proposition}[theorem]{Proposition}
\newtheorem{lemma}[theorem]{Lemma}

\theoremstyle{definition}
\newtheorem{definition}[theorem]{Definition}

\theoremstyle{remark}

%%%%%%%%%%%%%%%%%%%%%%%%%%%%%%%%%%
% Macros
%%%%%%%%%%%%%%%%%%%%%%%%%%%%%%%%%%
\newcommand{\iset}[1]{\left[#1\right]}

\newcommand{\rar}{\rightarrow}
\newcommand{\lar}{\leftarrow}

\newcommand{\norm}[1]{\left\Vert #1 \right\Vert}
\newcommand{\Prb}[1]{\Pr\left( #1 \right)}
\newcommand{\E}[1]{\mathbb{E}\left[#1\right]}
\newcommand{\EE}[2]{\mathbb{E}_{#1}\left[#2\right]}
\newcommand{\I}[1]{\mathbb{I}\left\{#1\right\}}
\newcommand{\sign}[1]{\mathrm{sign}\left( #1 \right)}
\newcommand{\addeq}{\addtocounter{equation}{1}\tag{\theequation}}

\newcommand{\real}{\mathbb{R}}
\newcommand{\normal}{\mathcal{N}}

\allowdisplaybreaks
\usepackage{enumitem}
\setlist[itemize]{leftmargin=*}

\begin{document}

% If your paper is accepted and the title of your paper is very long,
% the style will print as headings an error message. Use the following
% command to supply a shorter title of your paper so that it can be
% used as headings.
%
\runningtitle{Generalization Bias of Two Layer Convolutional Linear Classifiers}

% If your paper is accepted and the number of authors is large, the
% style will print as headings an error message. Use the following
% command to supply a shorter version of the authors names so that
% they can be used as headings (for example, use only the surnames)
%
%\runningauthor{Surname 1, Surname 2, Surname 3, ...., Surname n}

\twocolumn[

\aistatstitle{Towards Understanding the Generalization Bias of Two Layer Convolutional Linear Classifiers with Gradient Descent}

\aistatsauthor{ Yifan Wu\And Barnab\'as P\'oczos \And  Aarti Singh }

\aistatsaddress{ Carnegie Mellon University \\yw4@cs.cmu.edu \And Carnegie Mellon University \\bapoczos@cs.cmu.edu \And Carnegie Mellon University \\ aarti@cs.cmu.edu} ]

\begin{abstract}
A major challenge in understanding the generalization of deep learning is to explain why (stochastic) gradient descent can exploit the network architecture to find solutions that have good generalization performance when using high capacity models. We find simple but realistic examples showing that this phenomenon exists even when learning linear classifiers --- between two linear networks with the same capacity, the one with a convolutional layer can generalize better than the other when the data distribution has some underlying spatial structure. We argue that this difference results from a combination of the convolution architecture, data distribution and gradient descent, all of which are necessary to be included in a meaningful analysis. We analyze of the generalization performance as a function of data distribution and convolutional filter size, given gradient descent as the optimization algorithm, then interpret the results using concrete examples. Experimental results show that our analysis is able to explain what happens in our introduced examples.
\end{abstract}

\section{Introduction}
\label{sec:intro}

It has been shown that the capacities of successful deep neural networks are typically large enough such that %there exist certain weights that can fit the training data well but the network still generalizes poorly
they can fit random labelling of the inputs in a dataset \citep{zhang2016understanding}. Hence an important problem is to understand why gradient descent (and its variants) is able to find the solutions that generalize well on unseen data. 
%Recently an increasing amount of work has been done in this direction (cite some papers). 
Another key factor, besides gradient descent, in achieving good generalization performance in deep neural networks is architecture design with weight sharing (e.g. Convolutional Neural Networks (CNNs) \citep{lecun1998gradient} and Long Short Term Memories (LSTMs) \citep{hochreiter1997long} ). To the best of our knowledge, none of the existing work on analyzing the generalization bias of gradient descent takes these specific architectures into formal analysis. One may conjecture that the advantage of weight sharing is caused by reducing the network capacity compared with using fully connected layers
%, according to some prior knowledge on the data distribution, 
without talking about gradient descent. However, as we will show later, there is a joint effect between network architectures and gradient descent on the generalization performance even if the model capacity remains unchanged. In this work we try to analyze the generalization bias of two layer CNNs together with gradient descent, as one of the initial steps towards understanding the generalization performance of deep learning in practice.

CNNs have proven to be successful in learning tasks where the data distribution has some underlying spatial structure such as image classification \citep{krizhevsky2012imagenet, he2016deep}, Atari games \citep{mnih2013playing} and Go \citep{silver2017mastering}. A common view of how CNNs work is that convolutional filters extract high level features while pooling exploits spatial translation invariance \citep{Goodfellow-et-al-2016}. Pooling, however, is not always used, especially in reinforcement learning (RL) tasks (see the networks used in Atari games \citep{mnih2013playing}, and Go \citep{silver2017mastering}) even if exploiting some level of spatial invariance is desired for good generalization. For example, if we are training a robot arm to pick up an apple from a table, one thing we are expecting is that the robot learns to move its arm to the left if the apple is on its left and vice versa. If we use a policy network to decide ``left" or ``right", we expect the network to be able to generalize without being trained with all of the pixel level combinations of the (arm, apple) location pair. In order to see whether stacking up convolutional filters and fully connected layers without pooling can still exploit the spatial invariance in the data distribution, we design the following tasks, which are the simplified 1-D version of 2-D image based classification and control tasks:
\vspace{-0.3cm}
\begin{itemize}
\item{\textbf{Binary classification (Task-Cls):} Suppose we are trying to classify object A v.s. B given a 1-D $d$-pixel ``image" as input. We assume that only one of the two objects appears on each image and the object occupies exactly one pixel. In pixel level inputs $x\in \{-1, 0, +1\}^d$, we use $+1$ to represent object A, $-1$ for object B and $0$ for nothing. We use label $y=+1$ for object A and $y=-1$ for object B. The resulting dataset looks as follows:
\begin{align*}
 x = [0,......,0, -1, 0,...,0] \rar y = -1 \,;\\
 x = [0,...,0,+1,0,......,0] \rar y = +1 \,.
\end{align*}
The entire possible dataset contains $2d$ samples.
}
\item{\textbf{First-person vision-based control (Task-1stCtrl):} 
Suppose we are doing first-person view control in 3-D environments with visual images as input, e.g. robot navigation, and the task is to go to the proximity of object A. One decision the robot has to make is to turn left if the object is on the left half of the image and turn right if it is on the right half. We consider the simplified 1-D version, where each input $x\in \{0,1\}^d$ contains only one non-zero element $x_i=1$ with $y=-1$ if the object is on the left half ($i\le d/2$) and $y=+1$ if the object is on the right half ($i>d/2$). The resulting dataset looks as follows:
\begin{align*}
x = [0,...1..,0,......,0] \rar y = -1 \,;\\
x = [0,......,0,...1..,0] \rar y = +1 \,.
\end{align*}
The entire possible dataset contains $d$ samples.
}
\item{\textbf{Third-person vision-based control (Task-3rdCtrl):} We consider the fixed third-person view control, e.g. controlling a robot arm, and the task is to control the agent (arm), denoted by object B and represented by $-1$, to touch the target object A, represented by $+1$, in the scene. Again we want to move the arm B to the left ($y=-1$) if A is on the left of B and move it to the right ($y=+1$) if A is on the right. The resulting dataset looks as follows:
\begin{align*}
x = [0,...+1....-1.....,0] \rar y = -1  \,;\\
x = [0,......-1...+1...,0] \rar y = +1 \,.
\end{align*} 
The entire possible dataset contains $d(d-1)$ samples.
}
\end{itemize}
\vspace{-0.3cm}
Although all of the three tasks we described above have a finite number of samples in the whole dataset we still seek good generalization performance on these tasks when learning from only a subset of them. We do not want the learner to see almost all possible pixel-wise appearance of the objects before it is able to perform well. Otherwise the sample complexity will be huge when the resolution of the image becomes higher and the number of objects involved in the task grows larger.

One key property of all these three tasks we designed is that the data distribution is linearly separable even without introducing the bias term. That is, for each of the tasks, there exist at least one $w\in \real^d$ such that $y=\sign{w^Tx}$ for all $(x, y)$ in the whole dataset. This property gives superior convenience in both experiment control and theoretical analysis,  while several important aspects
, as we will explain later in this section, 
in analyzing the generalization of deep networks are still preserved: The linear separator $w$ for a training set is not unique and the interesting question is why different algorithms (architecture plus optimization routine) can find solutions that generalize better or worse on unseen samples.
%We will provide some intuition in Appendix.
%\begin{wrapfigure}{R}{0.45\textwidth}
%\vspace{-0.4cm}
%\begin{minipage}{0.44\textwidth}
%\vspace{-0.3cm}
\begin{figure}[t]
\vskip -0.0in
%\begin{center}
\centerline{\includegraphics[width=0.9\columnwidth]{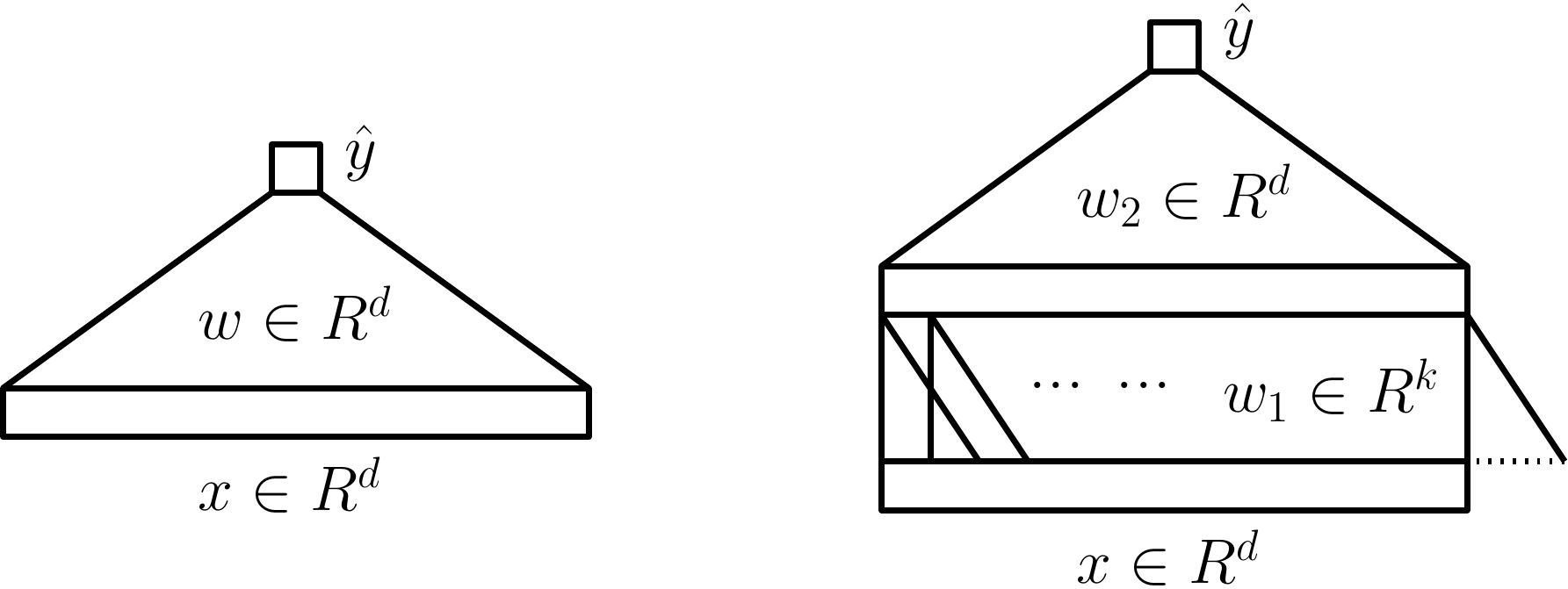}}
\caption{ Left: $\hat{y}=\sign{w^Tx}$ --- \textbf{Model-1-Layer}.   Right: $\hat{y}=\sign{ w_2^T\mathrm{Conv}(w_1, x)}$ --- \textbf{Model-Conv-$k$}. }
\label{fig:net-compare}
%\end{center}
\vskip -0.0in
\end{figure}
%\end{minipage}
%\vspace{-0.3cm}
%\end{wrapfigure}
%\vspace{-0.3cm}  

Since the data distribution is linearly separable the immediate learning algorithm one would try on these tasks is to train a single layer linear classifier using logistic regression, SVM, etc. However, this seems to be not exploiting the spatial structure of the data distribution and we are interested in seeing whether adding convolutional layers could help. More specifically, we consider adding a convolution layer between the input $x$ and the output layer, where the convolution layer contains only one size-$k$ filter ($\mathrm{output\_channel}=1$) with $\mathrm{stride}=1$ and without non-linear activation functions. Figure~\ref{fig:net-compare} shows the two models we are comparing. We also experimented with the fully-connected two layer linear classifier where the first layer has weight matrix $W_1\in \real^{d\times d}$ without non-linear activation. This model has the same generalization behavior as a single-layer linear classifier in all of our experiments so we will not show these results separately. This phenomenon, however, may also exhibit an interesting problem to study. 

It is worth noting that Model-1-Layer and Model-Conv-$k$ represent exactly the same set of functions. That is, for any $w$ in Model-1-Layer we are able to find $(w_1, w_2)$ in Model-Conv-$k$ such that they represent the same function, and vice versa. Therefore, both of the two models have the same capacity and any difference in the generalization performance cannot be explained by the difference of capacity. We compare the generalization performance of the two models in Figure~\ref{fig:net-compare} on all of the three tasks we have introduced. 
%Somewhat surprisingly, 
As shown in Figure~\ref{fig:gcomp}, Model-Conv-$k$ outperforms Model-$1$-Layer on all of the three tasks. The rest of our paper is motivated by explaining the generalization behavior of Model-Conv-$k$.
%\begin{wrapfigure}{R}{0.6\textwidth}
%\vspace{-0.4cm}
%\begin{minipage}{0.59\textwidth}
\begin{figure}[t]
\centering
\subfigure[Task-Cls]{
\includegraphics[width=0.43\columnwidth]{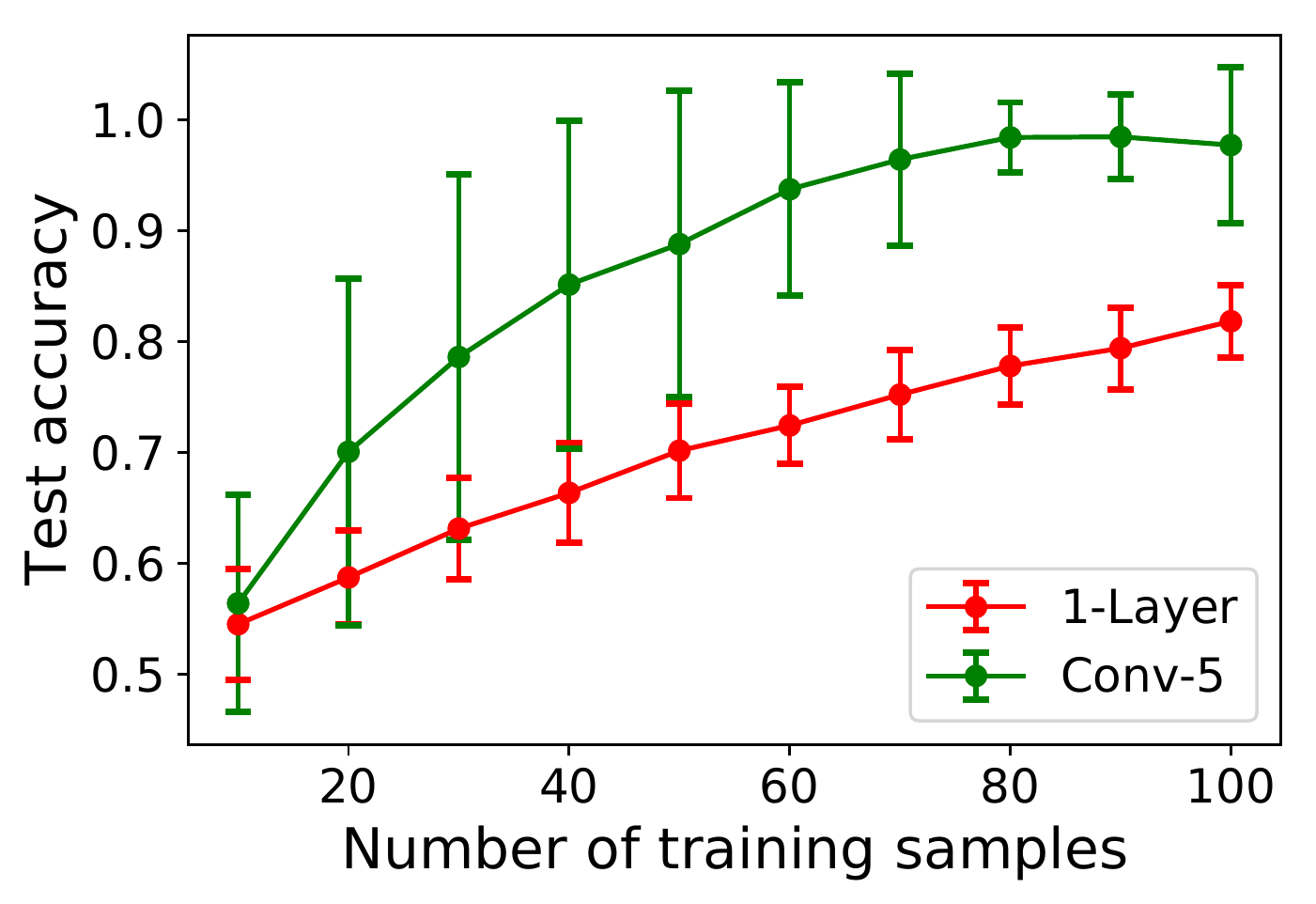}
\label{fig:gcomp1}
}
\subfigure[Task-1stCtrl]{
\includegraphics[width=0.43\columnwidth]{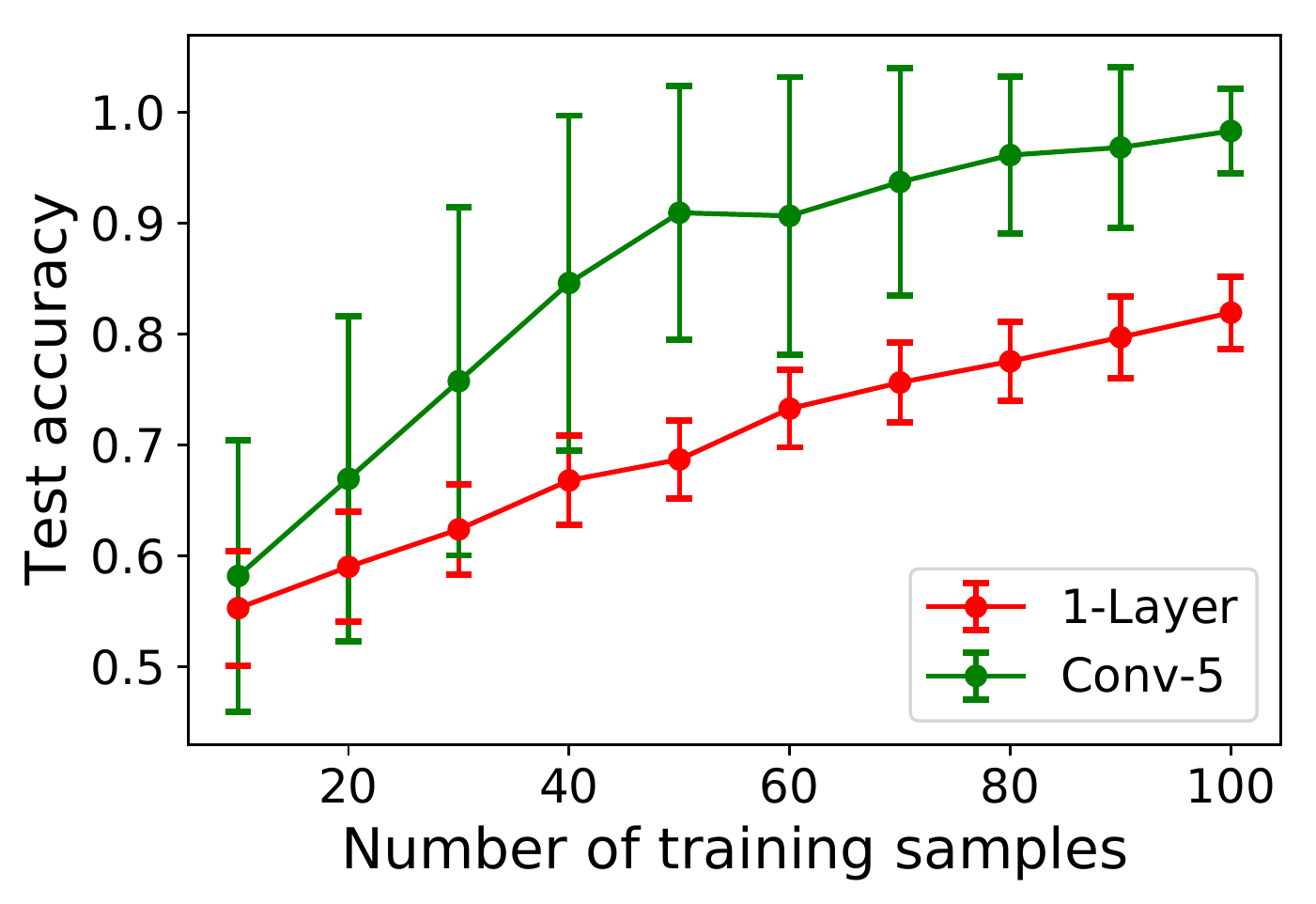}
\label{fig:gcomp2}
}
\subfigure[Task-3rdCtrl]{
\includegraphics[width=0.43\columnwidth]{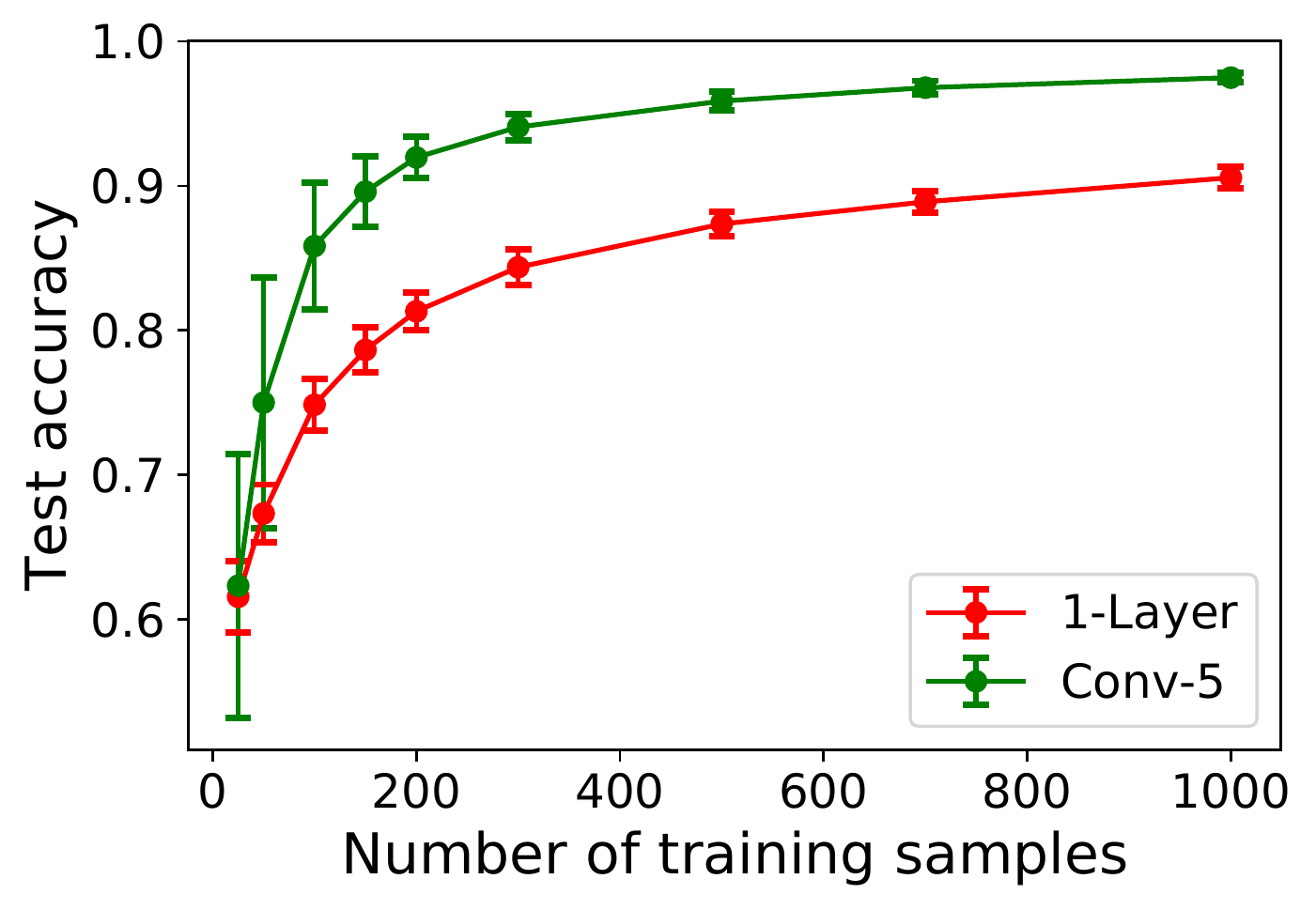}
\label{fig:gcomp3}
}
\vspace{-0.4cm}
\caption{
Comparing the generalization performance between single layer and two layer convolutional linear classifiers with different sizes of training data. Training samples are uniformly sampled from the whole dataset ($d=100$) with replacement. The models are trained by minimizing the hinge loss $(1-yf(x))_+$ using full-batch gradient descent. Training stops when training reaches $0$. Trained models are then evaluated on the whole dataset (including the training samples). For convolution layer we use a single size-$5$ filter with stride $1$ and padding with $0$. Each plotted point is based on repeating the same configuration for 100 times.  
}
\label{fig:gcomp}
\end{figure}
%\end{minipage}
%\vspace{-0.3cm}

%\end{wrapfigure}
% e.g. (i) Why it is better than Model-$1$-Layer, i.e. why the overparameterization is helpful? (ii) Why the difference is small when we have very few training samples? (iii) Where does the high variance come from?
Explaining our empirical observations requires a generalization analysis that depends on data distribution, convolution structure and gradient descent. Any of these three factors cannot be isolated from the analysis for the following reasons: 
%\vspace{-0.3cm}
\begin{itemize}
\item{\textbf{Data distribution:} If we randomly flip the label for each data point independently then all models will have the same generalize performance on unseen samples.  
}
\item{\textbf{Convolution structure:} The network structure is the main factor that we are trying to analyze. We further argue that we should explain the advantage of convolution and not just depth. This is because, as we mentioned earlier, adding a fully connected layer does not provide any advantage compared with a single-layer model in all of our experiments.  
}
\item{\textbf{Gradient descent:} The analysis should also include the optimization algorithm since the function classes represented by the two models we are comparing are equivalent. For example, the Model-Conv-$k$ can be optimized in the way that we first find a solution $w$ by optimizing Model-1-Layer then let $w_1=[1,0,...,0]$ and $w_2=w$, which also gives a solution for Model-Conv-$k$ but has no generalization advantage. Therefore, analyzing how gradient descent is able to exploit the convolution structure 
%to find a solution amongst many equivalent solutions with same training error but better generalization error
is necessary to explain the generalization advantage in our experiments. 
}
\end{itemize}

In this paper we provide a data dependent analysis on the generalization performance of two layer convolutional linear classifiers given gradient descent as the optimizer. In Section~\ref{sec:ana} we first give a general analysis then interpret our results using specific examples. In Section~\ref{sec:exp} we empirically verify that our analysis is able to explain the observations in our experiments in Figure~\ref{fig:gcomp}. %In Section~\ref{sec:dis} we further discuss some limitations of our analysis and possible future directions. 
Due to space constraints, proofs are relegated to the appendix.

Our main contribution can be highlighted as follows: (i) We design simple but realistic examples that are theory-friendly while preserving important challenges in understanding what is happening in practice. (ii) We are the first to provide a formal generalization analysis that considers the interaction among data distribution, convolution, and gradient descent, which is necessary to provide meaningful understanding for deep networks.  (iii) We derive a closed form weight dynamics under gradient descent with a modified hinge loss and relate the generalization performance to the first singular vector pair of some matrix computed from the training samples. (iv) We interpret the results with one of our concrete examples using Perron-Frobenius Theorem for non-negative matrices, which shows how much sample complexity we can save by adding a convolution layer. (v) Our result reveals an interesting difference between the generalization bias of ConvNet and that of traditional regularizations --- The bias itself requires some training samples to be built up. (vi) Our experiments show that our analysis is able to explain what happens in our examples. More specifically, we show that the performance under our modified hinge loss is strongly correlated with the performance under the real hinge loss.

%Motivation of the analysis. Used in practice. Models to compare. Simple and realistic examples. Property of these tasks. Observations: Capture invariance without pooling. No convolution, no better. Need for analysis: data distribution - network structure - gradient descent. All of these must be included. Closed form weight dynamics. A sufficient condition on when conv layer helps generalization. Verified by experiments. Interpreting the condition. Interesting discussion: margin/bias/early stop helps/hurts convergence/generalization. Not only local invariance - inv = 100. sufficient but not necessary condition, inv = 10. The effect of random initialization. Why the real performance is a complicated problem. Related work.  
\vspace{-0.2cm}
\section{Preliminaries}
\label{sec:pre}
\vspace{-0.2cm}

%In this section, we formulate the problem we studied and introduce some background for later analysis. 

\subsection{Learning Binary Classifiers}

We consider learning binary classifiers $\hat{y} = \sign{f_w(x)}$  with a function class $f$ parameterized by $w$, in order to predict the real label $y\in \{-1,+1\}$ given an input $x\in \real^d$. A random label is predicted with equal chance if $f_w(x)=0$. %\todoy{need to be clear and careful. assuming $1/2$ risk when $f_w(x)\rar 0$ may seem abnormal but is useful for analysis. One way to work around this is to make a continuous prob. prediction. Then make the transition period sharp to the limit.} 
In a single layer linear classifier (Model-1-Layer) we have $w\in \real^d$ and $f_w(x) = w^Tx$. In the two layer convolutional linear classifiers (Model-Conv-$k$) we have $w=(w_1, w_2)$ where the convolution filter $w_1\in \real^k$ and the output layer $w_2\in \real^d$ ($k\le d$). $f_w$ can be written as
$f_w(x) = \sum_{i=1}^d w_{2,i} \sum_{j=1}^{k} w_{1,j} x_{i+j-1}$
where every term whose index is out of range is treated as zero.

We denote the entire data distribution as $\D$, which one can sample data points $(x, y)$s from. We say drawing a training set $D\sim \D$ when we independently sample $n=|D|$ data points from $\D$ with replacement and take the collection as $D$. For finite datasets, e.g. in the tasks we introduced, we assume the data distribution is uniform over all data points. In this paper we only consider the case where there is no noise in the label, i.e. the true label $y$ is always deterministic given an input $x$, so that we can write $(x,y)\in D$ or $x\in D$ interchangeably. %For the convenience of analysis, we further assume $x=\vec{0}$ does not belong to the support of $\D$ (or has measure $0$). \todoy{This may not be the correct assumption}    

Given a training set $\Dtr \sim \D$ with $\ntr$ samples and a model $f_w$, we learn the classifier by minimizing the empirical hinge loss %\footnote{Here we generalize the typical hinge loss $(1-yf_w(x))_+$ to be $(\gamma-yf_w(x))_+$ with $\gamma>0$ for later convenience.}
$\Loss(w; \Dtr) = \frac{1}{\ntr} \sum_{(x,y)\in \Dtr} (1-yf_w(x))_+$
using full-batch gradient descent with learning rate $\alpha>0$:
\begin{align*}
w^{t+1} & = w^t - \alpha \nabla_w \Loss(w^t; \Dtr) \\
& = w^t + \frac{\alpha}{\ntr}\sum_{(x,y)\in \Dtr} \I{yf_{w^t}(x)<1} y \nabla_w f_{w^t}(x) \,.
\end{align*}
%In our experiments where the data distributions are linearly separable and $f_w(x)$ are linear models, an optimal solution $w^*$ can always minimize the training loss to be zero.

Given a classifier $f_w$ and data distribution $\D$, the generalization error can be written as 
\begin{align*}
&\Err(w;\D)  = \EE{\D}{\EE{\hat{y}}{\I{\hat{y}\ne y}}} \\
&= \EE{ \D}{\I{yf_w(x)<0}+\frac{1}{2}\I{yf_w(x)=0}}\\
&  = \EE{\D}{\bErr(yf_w(x))} \,, \addeq\label{eq:raw-err}
\end{align*}
where we define function $\bErr:\real \mapsto \{0, \frac{1}{2}, 1\}$ as $
\bErr(x) = \I{x<0}+\frac{1}{2}\I{x=0} 
$
which is non-increasing and satisfies $\bErr(\alpha x) = \bErr(x)$ for any $\alpha>0$.

\subsection{An Alternative Form for Two Layer ConvNets} 

For the convenience of analysis, we use an alternative form to express Model-Conv-$k$. Let $A_x \in \real^{d\times k}$ be $[x, x_{\lar_1},..., x_{\lar_{k-1}}]$, where $k$ is the size of the filter and $x_{\lar_l}$ is defined as the input vector left-shifted by $l$ positions: $x_{\lar_l, i} = x_{i+l}$ (pad with 0 if out of range).
Then $f_w(x)$ can be written as $f_w(x) = w_1^T A_x^T w_2$. The definition of $A_x$ is visualized in Figure~\ref{fig:A-matrix}.

%\begin{wrapfigure}{R}{0.45\textwidth}
%\vspace{-0.4cm}
%\begin{minipage}{0.44\textwidth}

\begin{figure}
\vskip -0.1in
%\begin{center}
\centerline{\includegraphics[width=0.5\columnwidth]{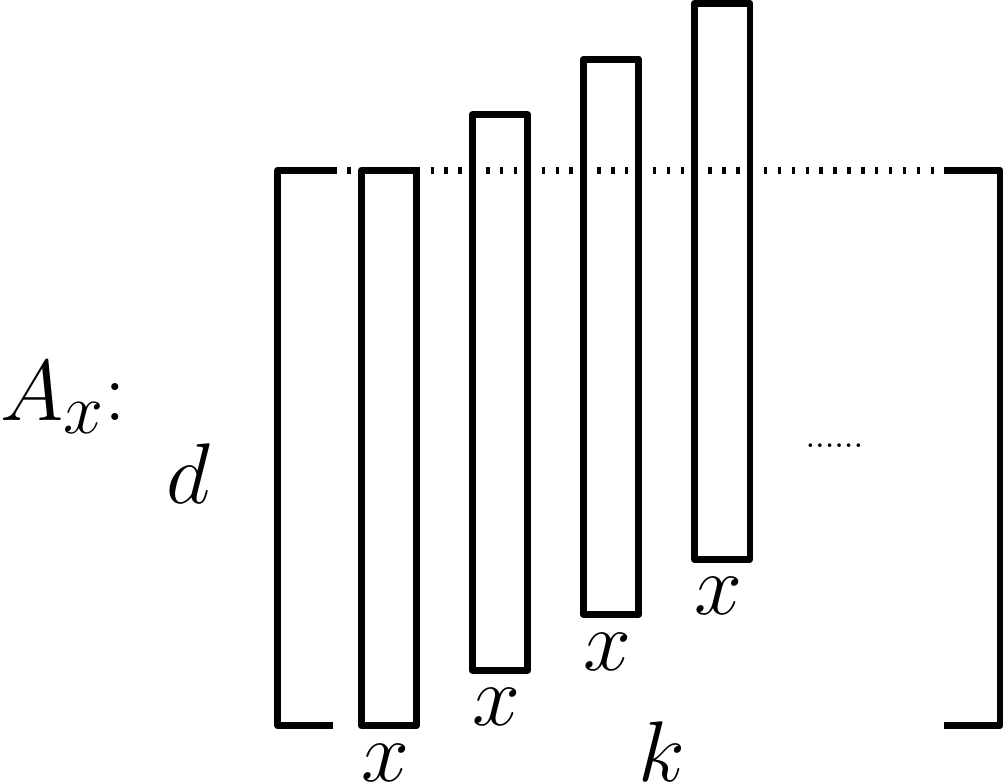}}
\caption{Matrix $A_x\in \real^{d\times k}$ given $x\in \real^d$.}
\label{fig:A-matrix}
%\end{center}
%\vskip -0.5in
\end{figure}
%\end{minipage}
%\vspace{-1.5cm}
%\end{wrapfigure}

Further define $M_{x,y} = yA_x$ then we have $yf_w(x)=w_1^T M_{x,y}^T w_2$. We can write the empirical loss as
$
\Loss(w; \Dtr)
= \frac{1}{\ntr} \sum_{(x,y)\in \Dtr} (1-w_1^T M_{x,y}^T w_2)_+ 
$
and the generalization error as
$
\Err(w;\D) = \EE{(x,y)\sim \D}{\bErr\left(w_1^T M_{x,y}^T w_2\right)}$.
\section{Theoretical Analysis}
\label{sec:ana}

In this section we analyze the generalization behavior of Model-Conv-$k$ when training with gradient descent. We introduce a modified version of the hinge loss which enables a closed-form  expression for the weight dynamics .  Based on the closed-form we show that the weights converge to some specific directions as $t\rar\infty$. Plugging the asymptotic weights back to the generalization error gives the observation that the generalization performance depends on the first singular vector pair of the average $M_{x,y}$ over the training samples. We interpret our result under Task-Cls, which shows that our analysis is well aligned with the empirical observations and quantifies how much ($\approx 2k-1$ times) sample complexity  can be saved by adding a convolution layer. 

\subsection{The Extreme Hinge Loss (X-Hinge)} 

We consider minimizing a linear variant of the hinge loss $\loss(w;x,y) = -yf_w(x)$. We call it \emph{the extreme hinge loss} because the gradient of this loss is the same as the gradient of the hinge loss $\loss(w;x,y) = (1-yf_w(x))_+$ when $yf_w(x)<1$. Then the training loss becomes
$\Loss(w; \Dtr)= -  w_1^T \Mtr^T w_2$
%& = - \frac{1}{\ntr} \sum_{(x,y)\in \Dtr} w_1^T M_{x,y}^T w_2 = -  w_1^T \Mtr^T w_2$
where we define $\Mtr = \frac{1}{\ntr} \sum_{(x,y)\in D} M_{x,y} \in \real^{d\times k}$. Note that minimizing this loss will lead to $\Loss \rar -\infty$. However, since the normal hinge loss can be viewed as a fit-then-stop version of X-hinge, considering loss $\Loss \rar -\infty$ in our cases gives interesting insights about the generalization bias of Conv-$k$ under the normal hinge loss. In the next section we will further verify the correlation between X-hinge and the normal hinge loss through experiments, which can be summarized as follows:  

\emph{(i) Under X-hinge $(w_1, w_2)$ converges to a limit direction which brings superior generalization advantage. (ii) Conv-$k$ generalizes better under normal hinge because the weights tend to converge to this limit direction (but stopped when training loss reaches $0$). (iii) The variance (due to different initialization)  in the generalization performance of Conv-$k$ comes from how close the weights are to this limit direction when training stops. }

\if0
\subsection{A Closed-Form Expression for Weight Dynamics}

The full-batch gradient descent update for minimizing X-hinge with learning rate $\alpha$ is $w_1^{t+1} = w_1^t + \alpha \Mtr^T w_2^t$ and 
$w_2^{t+1} = w_2^t + \alpha \Mtr w_1^t$
The following Lemma shows that $w_1^t$ and $w_2^t$ can be written in closed-forms in terms of $(w_1^0, w_2^0, \Mtr, \alpha, t)$:

\begin{lemma}
\label{lemma:w}
Let $\Mtr=U\Sigma V^T$ be (any of) its SVD such that $U\in \real^{d\times k}, \Sigma\in \real^{k\times k}, V\in \real^{k\times k}$, $U^TU=V^TV=VV^T=I$. Then for any $t\ge 0$
\begin{align*}
w_1^t = \frac{1}{2}V\left(\Lambda^{+,t} V^T w_1^0
+ \Lambda^{-,t} U^T w_2^0 \right) \,, w_2^t = \frac{1}{2}U\left(\Lambda^{-,t} V^T w_1^0
+ \Lambda^{+,t} U^T w_2^0\right)  - UU^T w_2^0 + w_2^0 \,. \addeq\label{eq:weights}
\end{align*} 
where we define $\Lambda^{+,t} = (I+\alpha\Sigma)^t + (I-\alpha\Sigma)^t$ and 
$\Lambda^{-,t} = (I+\alpha\Sigma)^t - (I-\alpha\Sigma)^t$.

\end{lemma}

\subsection{The Generalization Error}
For the simplicity of writing our analysis we let $w_2^0 = 0$, which does not affect our theoretical conclusion. We can write $w_1^t = \frac{1}{2}V\Lambda^{+,t}  V^T w_1^0$ and $w_2^t = \frac{1}{2}U \Lambda^{-,t}  V^T w_1^0 $.
For any $x, y$
\begin{align*}
yf_{w^t}(x) 
% & = {w_1^t}^T M_{x,y}^T w_2^t \\
= \frac{1}{4}{w_1^0}^T V \Lambda^{+,t} V^T M_{x,y}^T U \Lambda^{-,t} V^T w_1^0  \doteq \frac{1}{4} \F(x, y, t, w_1^0, \Dtr) \,.
\end{align*}
Given any training set $\Dtr$, if a classifier is trained by minimizing X-hinge using gradient descent for $t$ steps with weight initialization $w_1^0$, its generalization error can be written as
\begin{align*}
\Err(t, w_1^0, \Dtr;\D) = \Err(w^t(w_1^0, \Dtr);\D) = \EE{(x,y)\sim \D}{\bErr\left(\F(x, y, t, w_1^0, \Dtr)\right) } \,.
\end{align*}  
\fi

\subsection{An Asymptotic Analysis}

The full-batch gradient descent update for minimizing X-hinge with learning rate $\alpha$ is $w_1^{t+1} = w_1^t + \alpha \Mtr^T w_2^t$ and 
$w_2^{t+1} = w_2^t + \alpha \Mtr w_1^t$. For the simplicity of writing our analysis we let $w_2^0 = 0$, which does not affect our theoretical conclusion. Now we try to analyze the generalization error when $t\rar \infty$. (See Appendix for a finite-time closed form expression of $w^t$.) First we will show that the weight converge to a specific direction as $t\rar \infty$ given fixed $w_1^0$:
\begin{lemma}
\label{lemma:w-asym}
For any training set $\Dtr$ let $\Mtr=U\Sigma V^T$ be (any of) its SVD and $\sigma_1\ge \sigma_2 \ge ... \ge \sigma_k \ge 0$ be the diagonal of $\Sigma$ with $\sigma_1>0$.\footnote{We implicitly assume that the data distribution $\D$ satisfies $\Prb{\Mtr=0}=0$ for any $\ntr>0$, which is true in all of our examples.}
 Denote $1\le m\le k$ be the largest number such that $\sigma_1=\sigma_m$, then we have
\begin{align*}
 w_1^\infty \doteq  \lim_{t\rar +\infty} \frac{2 w_1^t}{(1+\alpha \sigma_1)^t} = V_{:m} V_{:m}^T w_1^0 \,,\\ w_2^\infty \doteq  \lim_{t\rar +\infty} \frac{2 w_2^t}{(1+\alpha \sigma_1)^t} = U_{:m} V_{:m}^T w_1^0\,, \addeq\label{eq:weights-limit}
\end{align*}
where $A_{:m}$ denotes the first $m$ columns of a matrix $A$. 
\end{lemma}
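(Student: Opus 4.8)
The plan is to decouple the coupled linear recursion by working in the singular bases of $\Mtr$, solve the resulting scalar recursions exactly, and then isolate the dominant growth mode as $t\rar\infty$.

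First I would change variables to $a^t = V^T w_1^t \in \real^k$ and $b^t = U^T w_2^t \in \real^k$. Substituting $\Mtr = U\Sigma V^T$ into the two updates and using $V^T V = I$ and $U^T U = I$ turns the recursion into $a^{t+1} = a^t + \alpha\Sigma b^t$ and $b^{t+1} = b^t + \alpha\Sigma a^t$. Since $\Sigma$ is diagonal this splits into $k$ independent two-dimensional systems, one per singular value $\sigma_i$, each governed by the matrix $\left(\begin{smallmatrix} 1 & \alpha\sigma_i \\ \alpha\sigma_i & 1\end{smallmatrix}\right)$, whose eigenvalues are $1\pm\alpha\sigma_i$ with eigenvectors $(1,\pm1)$. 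Using $w_2^0 = 0$, hence $b^0 = 0$, gives the closed forms $a_i^t = \tfrac{1}{2}a_i^0[(1+\alpha\sigma_i)^t + (1-\alpha\sigma_i)^t]$ and $b_i^t = \tfrac{1}{2}a_i^0[(1+\alpha\sigma_i)^t - (1-\alpha\sigma_i)^t]$, where $a^0 = V^T w_1^0$.

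The step I expect to be the main obstacle is recovering $w_1^t, w_2^t$ from $a^t, b^t$. Because $V$ is a genuine $k\times k$ orthogonal matrix ($VV^T = I$), we get $w_1^t = V a^t$ immediately. But $U\in\real^{d\times k}$ is only column-orthonormal, so $b^t = U^T w_2^t$ does not by itself determine $w_2^t$, and $UU^T\ne I$ in general. I would resolve this by observing that every gradient update adds a vector lying in the column span of $U$ (namely $\alpha U\Sigma V^T w_1^t$), so the component of $w_2^t$ orthogonal to $\mathrm{span}(U)$ never changes from its initial value, which is $0$; hence $w_2^t = U b^t$ holds exactly for all $t$.

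Finally I would divide by $(1+\alpha\sigma_1)^t$ and pass to the limit coordinate by coordinate. For $i \le m$ we have $\sigma_i = \sigma_1$, so the dominant term survives and the ratio tends to $a_i^0$, while for $i > m$ both $(1+\alpha\sigma_i)^t$ and $(1-\alpha\sigma_i)^t$ are strictly dominated by $(1+\alpha\sigma_1)^t$ and the ratio vanishes; the elementary inequality needed here, $|1-\alpha\sigma_i| < 1+\alpha\sigma_1$ for every $i$, holds for all $\alpha>0$ with no step-size restriction. Thus both $\tfrac{2a^t}{(1+\alpha\sigma_1)^t}$ and $\tfrac{2b^t}{(1+\alpha\sigma_1)^t}$ converge to $P_m a^0$, where $P_m$ projects onto the first $m$ coordinates. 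Mapping back through $V$ and $U$ yields $w_1^\infty = V P_m V^T w_1^0 = V_{:m}V_{:m}^T w_1^0$ and $w_2^\infty = U P_m V^T w_1^0 = U_{:m}V_{:m}^T w_1^0$, as claimed. The one remaining point to check is that these limits are well defined despite the non-uniqueness of the SVD when $\sigma_1 = \sigma_m$ has multiplicity $m>1$: both $V_{:m}V_{:m}^T$ (a spectral projector) and $U_{:m}V_{:m}^T$ are invariant under the residual orthogonal freedom $V_{:m}\mapsto V_{:m}Q,\ U_{:m}\mapsto U_{:m}Q$ within the top singular subspace, which is exactly why the statement refers to ``(any of) its SVD.''
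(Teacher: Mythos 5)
Your proposal is correct and follows essentially the same route as the paper: the paper's appendix first establishes exactly your closed form $w_1^t = \tfrac{1}{2}V\Lambda^{+,t}V^T w_1^0$, $w_2^t = \tfrac{1}{2}U\Lambda^{-,t}V^T w_1^0$ (with $\Lambda^{\pm,t} = (I+\alpha\Sigma)^t \pm (I-\alpha\Sigma)^t$), verified by induction rather than derived via your basis change and $2\times 2$ decoupling, and then passes to the limit using the same three ratio facts, including the same observation that $|1-\alpha\sigma_i| < 1+\alpha\sigma_1$ holds for every $i$ with no step-size restriction. Your span-invariance argument for recovering $w_2^t = Ub^t$ plays the role of the paper's correction term $-UU^Tw_2^0 + w_2^0$ in its general closed form, which vanishes once $w_2^0 = 0$ is imposed.
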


Let $w^\infty = (w_1^\infty, w_2^\infty)$ be a random variable that depends on $(\Dtr, w_1^0)$ and $\F^\infty(x,y,w_1^0,\Dtr)  \doteq yf_{w^\infty}(x) = {w_1^\infty}^TM_{x,y}^T w_2^\infty  = {w_1^0}^TV_{:m}V_{:m}^TM_{x,y}^TU_{:m}V_{:m}^Tw_1^0$.
We define the \emph{asymptotic generalization error} for Model-Conv-$k$ with gradient descent on data distribution $\D$ as
\footnote{Note that $\Err(w^\infty, \D)=\lim_{t\rar\infty} \Err(w^t, \D)$ may not hold due to the discontinuity of $\I{\cdot}$.} 
\begin{align*}
\Err^\infty_{\mathrm{Convk}}(\D)&  \doteq \EE{\Dtr, w_1^0}{\Err(w^\infty, \D)} \\
& = \EE{w_1^0, \Dtr, (x,y)}{\bErr\left(\F^\infty(x,y,w_1^0,\Dtr)\right)} \,.\addeq\label{eq:asym-err-1}
\end{align*}
One can further remove the dependence on $w_1^0$ when using Gaussian initialization:
\begin{theorem}
\label{thm:asym-err-bound}
Consider training Model-Conv-$k$ by gradient descent with initialization $w_1^0 \sim \normal(0, b^2I_k)$ for some $b>0$ and $w_2^0=0$. Let $UV_1^{M}$ denote the set of left-right singular vector pairs corresponding to the largest singular value $\sigma_1$ for a given matrix $M$. The asymptotic generalization error in \eqref{eq:asym-err-1} can be upper bounded by
%\begin{align*}
$
\Err^\infty_{\mathrm{Convk}}(\D) 
 \le \EE{\Dtr, (x,y)}{\bErr\left(\min_{(u,v)\in UV_1^{\Mtr}} v^T M_{x,y}^T u \right) }$.% \,.\addeq\label{eq:asym-err-2}
%\end{align*}
\end{theorem}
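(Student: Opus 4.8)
The plan is to condition on the training set $\Dtr$ and the test point $(x,y)$, reducing the claim to a single per-sample inequality over the randomness of the initialization $w_1^0$ alone. Since $w_1^0\sim\normal(0,b^2 I_k)$ is drawn independently of the data, the tower property gives
$$\Err^\infty_{\mathrm{Convk}}(\D)=\EE{\Dtr,(x,y)}{\EE{w_1^0}{\bErr\left(\F^\infty(x,y,w_1^0,\Dtr)\right)}},$$
so it suffices to show, for every fixed $\Dtr$ and $(x,y)$, that the inner expectation over $w_1^0$ is at most $\bErr\!\left(\min_{(u,v)\in UV_1^{\Mtr}} v^T M_{x,y}^T u\right)$. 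The outer expectation then preserves the inequality by monotonicity of integration.

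Next I would rewrite the inner quantity as a Gaussian quadratic form. Using the asymptotic weights from Lemma~\ref{lemma:w-asym}, $\F^\infty={w_1^0}^T V_{:m}V_{:m}^T M_{x,y}^T U_{:m}V_{:m}^T w_1^0$. Setting $z=V_{:m}^T w_1^0$ and $B=V_{:m}^T M_{x,y}^T U_{:m}\in\real^{m\times m}$, this becomes $\F^\infty=z^T B z=z^T B_{\mathrm{sym}} z$ with $B_{\mathrm{sym}}=(B+B^T)/2$; because $V_{:m}$ has orthonormal columns, $z\sim\normal(0,b^2 I_m)$. On the right-hand side, every singular pair for $\sigma_1$ has the form $(u,v)=(U_{:m}c,V_{:m}c)$ for a unit vector $c\in\real^m$ (this follows from $M_{\mathrm{tr}}v=\sigma_1 u$ forcing $u$ once $v$ is fixed), so $v^T M_{x,y}^T u=c^T B c=c^T B_{\mathrm{sym}}c$, and minimizing over unit $c$ gives exactly $\lambda_{\min}(B_{\mathrm{sym}})$ by the Rayleigh characterization. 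The per-sample goal thus reduces to
$$\EE{z\sim\normal(0,b^2 I_m)}{\bErr\left(z^T B_{\mathrm{sym}}z\right)}\le\bErr\left(\lambda_{\min}(B_{\mathrm{sym}})\right).$$

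Finally I would establish this by a case split on the sign of $\lambda_{\min}(B_{\mathrm{sym}})$. If $\lambda_{\min}>0$ then $B_{\mathrm{sym}}\succ0$, so $z^T B_{\mathrm{sym}}z>0$ almost surely and both sides are $0$; if $\lambda_{\min}<0$ the right-hand side equals $1$ and the bound is trivial since $\bErr\le1$. The delicate case is $\lambda_{\min}=0$, where the right-hand side is $\tfrac12$: here $B_{\mathrm{sym}}\succeq0$ forces $z^T B_{\mathrm{sym}}z\ge0$ a.s., and because $B_{\mathrm{sym}}$ is PSD the event $\{z^T B_{\mathrm{sym}}z=0\}$ coincides with $\{z\in\ker B_{\mathrm{sym}}\}$. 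When $B_{\mathrm{sym}}=0$ the kernel is all of $\real^m$ and both sides equal $\tfrac12$; when $B_{\mathrm{sym}}\ne0$ the kernel is a proper subspace, which has Gaussian measure zero, so the left-hand side is $0<\tfrac12$. This measure-zero argument for the degenerate PSD boundary is the main obstacle and the only place where absolute continuity of the Gaussian is invoked; once it is handled, taking expectation over $\Dtr$ and $(x,y)$ completes the proof.
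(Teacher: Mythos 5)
Your proof is correct, and it rests on the same two pillars as the paper's own argument: writing $\F^\infty$ as the Gaussian quadratic form $z^T B z$ with $z = V_{:m}^T w_1^0 \sim \normal(0, b^2 I_m)$ and $B = V_{:m}^T M_{x,y}^T U_{:m}$, and identifying $\sigma_1$-singular pairs of $\Mtr$ with pairs $(U_{:m}c, V_{:m}c)$ for unit $c\in\real^m$. Where you diverge is the finishing step. The paper never evaluates the minimum: it normalizes $z$, observes that $\left(U_{:m}z/\norm{z}_2,\, V_{:m}z/\norm{z}_2\right)$ is itself an element of $UV_1^{\Mtr}$, and then invokes the two properties of $\bErr$ (non-increasing, and $\bErr(\alpha x)=\bErr(x)$ for $\alpha>0$) to get the pointwise almost-sure bound $\bErr(\F^\infty)=\bErr\left(\F^\infty/\norm{z}_2^2\right)\le\bErr\left(\min_{(u,v)\in UV_1^{\Mtr}} v^T M_{x,y}^T u\right)$, after which taking expectations is immediate; the scale invariance of $\bErr$ silently absorbs all sign cases at once. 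You instead compute the minimum in closed form as $\lambda_{\min}(B_{\mathrm{sym}})$ with $B_{\mathrm{sym}}=(B+B^T)/2$, via symmetrization and the Rayleigh characterization, and then split on the sign of $\lambda_{\min}$, using absolute continuity of the Gaussian (non-vanishing of $z$, and the measure-zero kernel of a nonzero PSD matrix) to settle the strict and boundary cases. Both arguments are sound; note that your chain of inequalities actually needs only the forward inclusion $\left\{(U_{:m}c, V_{:m}c)\,:\,\norm{c}_2=1\right\}\subseteq UV_1^{\Mtr}$, since that already gives $\min_{(u,v)\in UV_1^{\Mtr}} v^T M_{x,y}^T u \le \lambda_{\min}(B_{\mathrm{sym}})$, although your converse claim is also true because $\sigma_1>0$ forces $v\in\mathrm{span}(V_{:m})$ and $u=\Mtr v/\sigma_1$. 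What the paper's route buys is brevity and freedom from case analysis; what yours buys is an explicit spectral formula for the bound --- the right-hand side is exactly $\EE{\Dtr,(x,y)}{\bErr\left(\lambda_{\min}(B_{\mathrm{sym}})\right)}$ --- which makes the degenerate case $\lambda_{\min}=0$ and the conditions for equality transparent.
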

When the first singular vector pair of $\Mtr$ is unique (which is always true when $\ntr$ is not too small in our experiments), denoted by $(u, v)$, we have $m=1$ and Lemma~\ref{lemma:w-asym} says that $w_1^t$ converges to the same direction as $v$ while $w_2^t$ converges to the same direction as $u$. In this case Theorem~\ref{thm:asym-err-bound} holds with equality and  we can remove the $\min$ operator %in \eqref{eq:asym-err-2}
. The asymptotic generalization performance is characterized by how many data points in the whole dataset can be correctly classified by Model-Conv-$k$ with the first singular vector pair of $\Mtr$ as its weights. Later on we will show that this quantity is highly correlated with the real generalization performance in practice where we use the original hinge loss but not the extreme one.       

\subsection{Interpreting the Result with Task-Cls}

We will use our previously introduced task Task-Cls to show that the quantity in Theorem~\ref{thm:asym-err-bound} is non-vacuous: it saves approximately $2k-1$ times samples over Model-$1$-Layer in Task-Cls.

\subsubsection{Decomposing the generalization error}

\textbf{Notation.} For any $l\in \iset{d}=\{1,...,d\}$ define $e_l \in \{0,1\}^d$ to be the vector that has $1$ in its $l$-th position and $0$ elsewhere. Then the set of inputs $x$ in Task-Cls is the set of $e_l$ and $-e_l$ for all $l$.     
Note that in Task-Cls $y_{(-x)}=-y_{(x)}$ and $M_{-x,-y}=M_{x,y}$ so each pair of data points $e_l$ and $-e_l$ can be treated equivalently during training and test. Thus we can think of sampling from $\D$ as sampling from the $d$ positions. Let $\uni\iset{d}$ denote the uniform distribution over $\iset{d}$. Given a training set $\Dtr$ define $\Str=\{l\in \iset{d}\,:\,e_l\in \Dtr \lor -e_l\in\Dtr\}$ to be the set of non-zero positions that appear in $\Dtr$.

To analyze the quantity in Theorem~\ref{thm:asym-err-bound} we notice that all elements in $\Mtr$ are non-negative for any $\Dtr$. By applying the Perron-Frobenius theorem \citep{frobenius1912matrizen} which characterizes the spectral property for non-negative matrices we can further decompose it %\eqref{eq:asym-err-2} 
into two parts. We first introduce the following definition\footnote{See appendix for what a primitive matrix looks like and what it indicates.}:
\begin{definition}
Let $A\in \real^{k\times k}$ be a non-negative square matrix. $A$ is \emph{primitive} if there exists a positive integer $t$ such that $A^t_{ij}>0$ for all $i,j$. 
\end{definition}

Now we are ready to state the following theorem:
\begin{theorem}
Let $\Omega(A)$ be the event that $A$ is primitive and $\Omega^c(A)$ be its complement. Consider training Model-Conv-$k$ with gradient descent on Task-Cls. The asymptotic generalization error defined in \eqref{eq:asym-err-1} can be upper bounded by 
%\begin{align*}
$\Err^\infty_{\mathrm{Convk}}  \le \Prb{\Omega^c(\Mtr^T\Mtr)}+ \frac{1}{2}\EE{l\sim \uni\iset{d} }{\Prb{\forall l'\in \Str, |l'-l|\ge k}} $.
%\,.\addeq\label{eq:err-decomp}
%\end{align*}
\label{thm:err-decomp}
\end{theorem}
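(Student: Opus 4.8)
The plan is to start from the upper bound proved in Theorem~\ref{thm:asym-err-bound}, namely $\Err^\infty_{\mathrm{Convk}} \le \EE{\Dtr, (x,y)}{\bErr\left(\min_{(u,v)\in UV_1^{\Mtr}} v^T M_{x,y}^T u\right)}$, and to evaluate the inner quantity explicitly using the combinatorial structure of Task-Cls. First I would condition on the draw of $\Dtr$ and split the outer expectation according to whether $\Mtr^T\Mtr$ falls in $\Omega$ (primitive) or $\Omega^c$ (its complement). On $\Omega^c$, I simply bound $\bErr\le 1$, which accounts for the first term $\Prb{\Omega^c(\Mtr^T\Mtr)}$. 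All the real work is in the primitive case, where I must show that the conditional half-error equals exactly $\frac{1}{2}\Prb{\forall l'\in\Str,\,|l'-l|\ge k}$ for a test position $l$.

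On the event $\Omega(\Mtr^T\Mtr)$ I would invoke Perron--Frobenius. Since every $M_{x,y}$ in Task-Cls equals some $A_{e_{l'}}$, the matrix $\Mtr$ is entrywise non-negative, hence $\Mtr^T\Mtr$ is a non-negative symmetric matrix; when it is primitive its top eigenvalue is simple with a strictly positive eigenvector $v\in\real^k$. Simplicity makes $\sigma_1$ a simple singular value, so $UV_1^{\Mtr}$ consists of the single pair $(u,v)$ up to the global sign flip $(u,v)\mapsto(-u,-v)$, under which $v^T M_{x,y}^T u$ is invariant; this eliminates the $\min$ and gives equality in Theorem~\ref{thm:asym-err-bound}. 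Setting $u=\Mtr v/\sigma_1\ge 0$, I would then read off the supports: using $A_{e_{l'}}=[e_{l'},e_{l'-1},\dots,e_{l'-k+1}]$ one checks $(\Mtr)_{ij}>0$ iff $i+j-1\in\Str$, so $u_i>0$ iff some $l'\in\Str$ lies in the window $[i,i+k-1]$. For a test position $l$ we have $M_{x,y}=A_{e_l}$, whence $v^T M_{x,y}^T u=\sum_{j=1}^k v_j\,u_{l-j+1}$, a sum of non-negative terms with strictly positive weights $v_j$. This is positive iff $u_i>0$ for some $i\in\{l-k+1,\dots,l\}$, and zero otherwise; taking the union of the windows $[i,i+k-1]$ over those $i$ collapses the condition to ``some $l'\in\Str$ with $|l'-l|<k$''. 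Hence on $\Omega$, $\bErr(v^T M_{x,y}^T u)=\frac12\,\I{\forall l'\in\Str,\,|l'-l|\ge k}$.

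To finish, I would use that sampling $(x,y)\sim\D$ in Task-Cls is equivalent to sampling $l\sim\uni\iset{d}$ (since $\pm e_l$ are interchangeable via $M_{-x,-y}=M_{x,y}$), take the expectation of the half-error over the test point and over $\Dtr$, bound the indicator $\I{\Omega}\le 1$, and swap the order of the expectations over $l$ and over $\Dtr$ by Tonelli to arrive at the stated $\frac{1}{2}\EE{l\sim\uni\iset{d}}{\Prb{\forall l'\in\Str,\,|l'-l|\ge k}}$. The main obstacle will be the primitive case: correctly justifying, via Perron--Frobenius, the uniqueness of the top singular pair (to drop the $\min$) together with the strict positivity of $v$ and non-negativity of $u$, and then pushing these through the combinatorial window/interval argument that turns the support of $u$ into the clean distance-$k$ condition on $\Str$.
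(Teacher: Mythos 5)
Your proposal is correct and follows essentially the same route as the paper's proof: the same split into the non-primitive event (bounded by $1$) and the primitive event, the same Perron--Frobenius argument giving a unique, strictly positive $v$ and non-negative $u=\Mtr v/\sigma_1$, and the same window/support analysis showing $v^TM_{e_l,1}^Tu=0$ iff no training position lies within distance $k$ of $l$. The only cosmetic difference is that you note equality in Theorem~\ref{thm:asym-err-bound} on the primitive event, whereas the paper only needs the inequality; both yield the stated bound.
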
 
\if0
\emph{Proof sketch.} The proof is mainly based on the following facts: (i) If $\Mtr^T\Mtr$ is primitive then the first pair of singular vectors $u,v$ of $\Mtr$ is unique and all elements in $v$ are positive (up to sign flipping). (ii) $u=\Mtr v/\sigma_1$ gives that $u$ is non-negative and $u_i>0$ iff there exists $i\le l<i+k$ such that $l\in \Str$. (iii) For $x=\pm e_l$, $v^TM_{x,y}^T u>0$ iff there exists $l-k<i\le l$ such that $u_i>0$, which means that there exists $l-k<l'<l+k$ such that $l\in \Str$. So $x$ will be misclassified only if there is no training sample within its $k$-neighborhood.
\fi
The message delivered by Theorem~\ref{thm:err-decomp} is that the upper bound of the asymptotic generalization error depends on whether $\Mtr^T\Mtr$ is primitive and (if yes) how much of the whole dataset is covered by the $k$-neighborhoods of the points in the training set. Next we will discuss the two quantities in Theorem~\ref{thm:err-decomp} separately.

First consider the second term. Let $\Xtr=\{x_1,x_2,...,x_{n}\}$ be the collection of $x$s in the training set $\Dtr$ with $\ntr=n$ and $\Ltr=\{l_1,l_2,...,l_{n}\}$ be the corresponding non-zero positions of $\Xtr$, which are i.i.d. samples from $\uni\iset{d}$. Therefore
%\begin{align*}
$
 \Prb{\forall l'\in \Str, |l'-l|\ge k} 
 = \Prb{\bigcap_{i=1}^{n} |l_i-l|\ge k }
% & = \Prb{|l'-l|\ge k ; l'\sim \uni\iset{d}}^{n} \\
 = \left(\frac{d-k-\min\{k, l, d-l+1\}+1}{d}\right)^{n}$.
%\end{align*}
The second quantity now can be exactly calculated by averaging over all $l\in \iset{d}$. To get a cleaner form that is independent of $l$, we can either further upper bound it by $\left(\frac{d-k}{d}\right)^n$ or approximate it by $\left(\frac{d-2k+1}{d}\right)^{n}$ if $k\ll d$.

Now come back to the first quantity 
%in \eqref{eq:err-decomp}
, which is the probability that $\Mtr^T\Mtr$ is not primitive. Exactly calculating or even tightly upper bounding this quantity seems hard so we derive a sufficient condition for $\Mtr^T\Mtr$ to be primitive so that the probability of its complement can be used to upper bound the probability that $\Mtr^T\Mtr$ is not primitive:
\begin{lemma}
\label{lemma:primitive}
Let $\tOtr$ be the event that there exists $k\le i \le d$ such that both $i-1, i \in \Str$. If $\tOtr$ happens then $\Mtr^T\Mtr$ is primitive.
\end{lemma}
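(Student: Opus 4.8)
The plan is to translate primitivity of $\Mtr^T\Mtr$ into a purely combinatorial statement about the occupied position set $\Str$, and then, under $\tOtr$, exhibit a self-loop at every vertex together with a connecting path in the associated directed graph. First I would record the support pattern of $\Mtr$. In Task-Cls every training point $\pm e_l$ contributes $M_{x,y}=A_{e_l}$, whose $(i,j)$ entry is $\I{i+j-1=l}$; averaging gives $\Mtr_{ij}=\frac{1}{\ntr}|\{l\in\Ltr: l=i+j-1\}|\ge 0$, so that $\Mtr_{ij}>0$ iff $i+j-1\in\Str$. Writing $N=\Mtr^T\Mtr\in\real^{k\times k}$, which is symmetric and non-negative, its entries are $N_{pq}=\sum_i \Mtr_{ip}\Mtr_{iq}$, a sum of non-negative terms, hence $N_{pq}>0$ iff there is a row index $i$ with both $i+p-1\in\Str$ and $i+q-1\in\Str$.

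Next I would use $\tOtr$: fix $i_0$ with $i_0-1,i_0\in\Str$ and $k\le i_0\le d$. For the diagonal, taking the row $i=i_0-p+1$ (which lies in $\iset{d}$ precisely because $p\le k\le i_0$) gives $i+p-1=i_0\in\Str$, so $N_{pp}>0$ for every $p\in\iset{k}$; thus every vertex of the graph of $N$ carries a self-loop. For the adjacent off-diagonal entries, taking $i=i_0-p$ (valid since $p\le k-1\le i_0-1$) gives $i+p-1=i_0-1\in\Str$ and $i+(p+1)-1=i_0\in\Str$, hence $N_{p,p+1}>0$ for every $p\in\iset{k-1}$, and by symmetry $N_{p+1,p}>0$ as well. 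The role of the constraint $k\le i_0\le d$ is exactly to keep all $k$ of these row indices inside the valid range, and this index bookkeeping is the one place I expect to be the main obstacle, so I would spell it out explicitly.

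Finally I would conclude primitivity directly from the definition rather than invoking the irreducible-plus-aperiodic characterization. For any $p,q\in\iset{k}$, say $p\le q$, the walk $p\to p+1\to\cdots\to q$ uses $q-p\le k-1$ positive adjacent edges, and padding it with $k-1-(q-p)$ self-loops at $q$ produces a walk of length exactly $t=k-1$ all of whose factors $N_{v_{s-1}v_s}$ are positive; therefore $(N^{k-1})_{pq}>0$, and by symmetry the same holds for $p>q$. Since this covers every pair $(p,q)$, the matrix $N^{k-1}$ is entrywise positive, so $N$ is primitive with witness $t=k-1$. The case $k=1$ needs no work, since $\tOtr$ is then vacuous; otherwise the walk-counting is routine and only the middle step requires care.
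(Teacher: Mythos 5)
Your proof is correct and takes essentially the same route as the paper's: both arguments rest on the same support computation, showing that under $\tOtr$ every diagonal entry of $\Mtr^T\Mtr$ and every adjacent off-diagonal entry $(\Mtr^T\Mtr)_{p,p+1}$, $(\Mtr^T\Mtr)_{p+1,p}$ is positive, via exactly the same row witnesses $i_0-p+1$ and $i_0-p$ that the paper uses. The only difference is in the final step --- the paper cites the Perron--Frobenius criterion (irreducible plus a positive diagonal entry implies primitive), whereas you inline the standard proof of that criterion by padding paths with self-loops to get $\left((\Mtr^T\Mtr)^{k-1}\right)_{pq}>0$; the one small slip is your remark that $\tOtr$ is vacuous when $k=1$ (it is not: two adjacent occupied positions can certainly occur then), but that case is anyway covered by your own diagonal argument with witness $t=1$.
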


Lemma~\ref{lemma:primitive} says that $\Mtr^T\Mtr$ is primitive if there exist two training samples with adjacent non-zero positions and the positions should be after $k$ due to shifting/padding issues. Thus we have $\Prb{\Omega^c(\Mtr^T\Mtr)}\le \Prb{\tOtr^c}$. Calculating the quantity $\Prb{\tOtr^c}$ which is the probability that no adjacent non-zero positions after $k$ appear in a randomly sampled training set with size $n$, however, is still hard so we empirically estimate $\Prb{\tOtr^c}$. Figure~\ref{fig:ana1} shows that  $\Prb{\tOtr^c}$ has a lower order than the quantity $\left(\frac{d-2k+1}{d}\right)^{n}$ as $n$ goes larger so the second term in Theorem~\ref{thm:err-decomp} becomes dominating in the generalization bound. We give a rough intuition about this: let $s_n$ be the expected number of unique samples when we uniformly draw $n$ samples from $1,...,d$, then $s_n\rar d$ as $n\rar \infty$. The available slots for the $n+1$-th sample not creating adjacent pairs is at most $d-s_n$. So the total probability of not having adjacent pairs can be roughly upper bounded by $\prod_{i=1}^n \frac{d-s_n}{d}$. Taking the ratio to $\left(\frac{d-2k+1}{d}\right)^{n}$ gives $\prod_{i=1}^n \frac{d-s_n}{d-2k+1}$, which goes to zero as $n\rar \infty$ and $s_n\rar d$.

%\begin{wrapfigure}{R}{0.50\textwidth}
%\vspace{-0.0cm}
%\begin{minipage}{0.49\textwidth}
\begin{figure}[h]
\centering
\subfigure[Ratio Err\_1/Err\_2]{
\includegraphics[width=0.45\columnwidth]{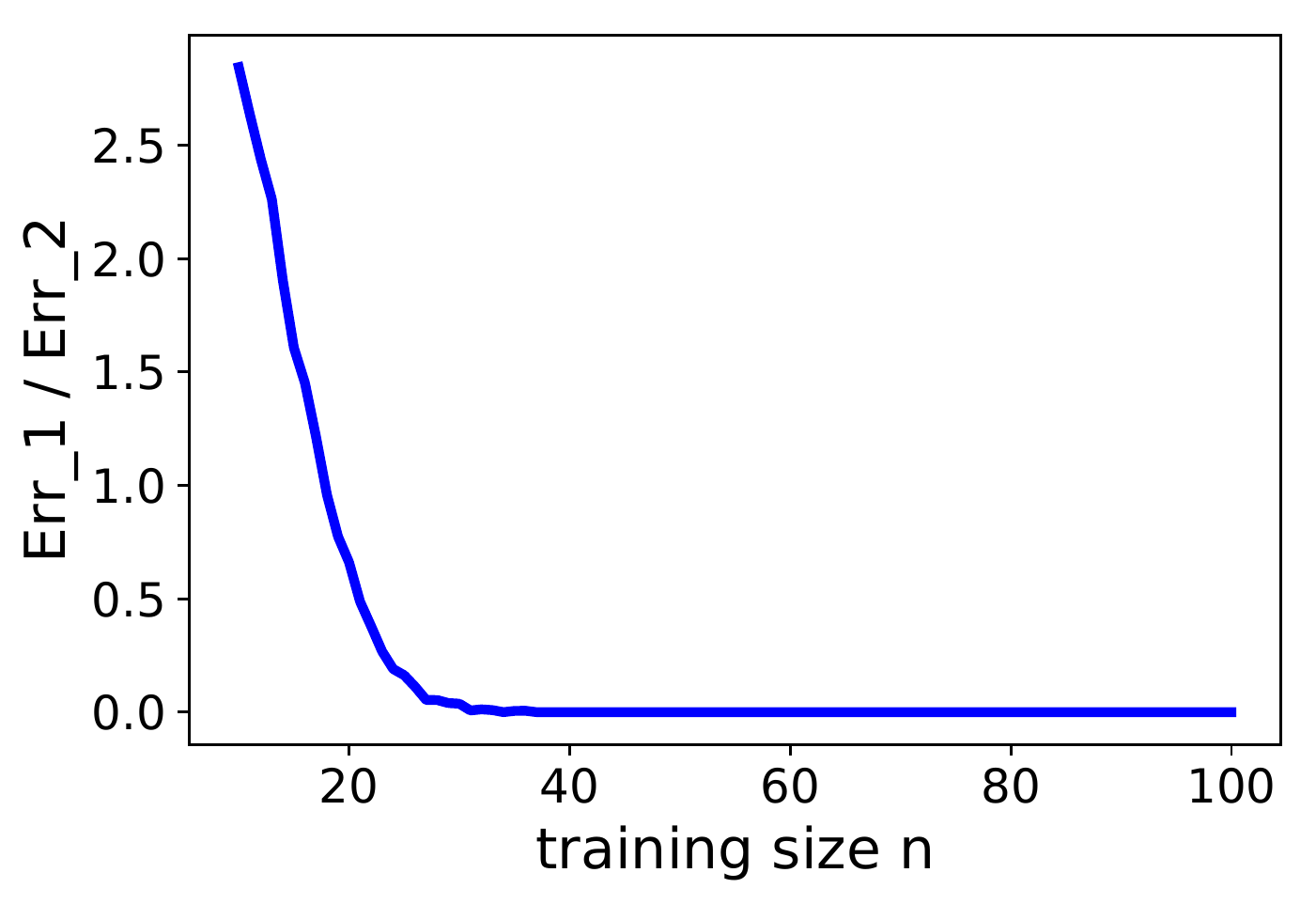}
\label{fig:ana1}
}
\subfigure[Conv-$5$ v.s. $1$-Layer]{
\includegraphics[width=0.45\columnwidth]{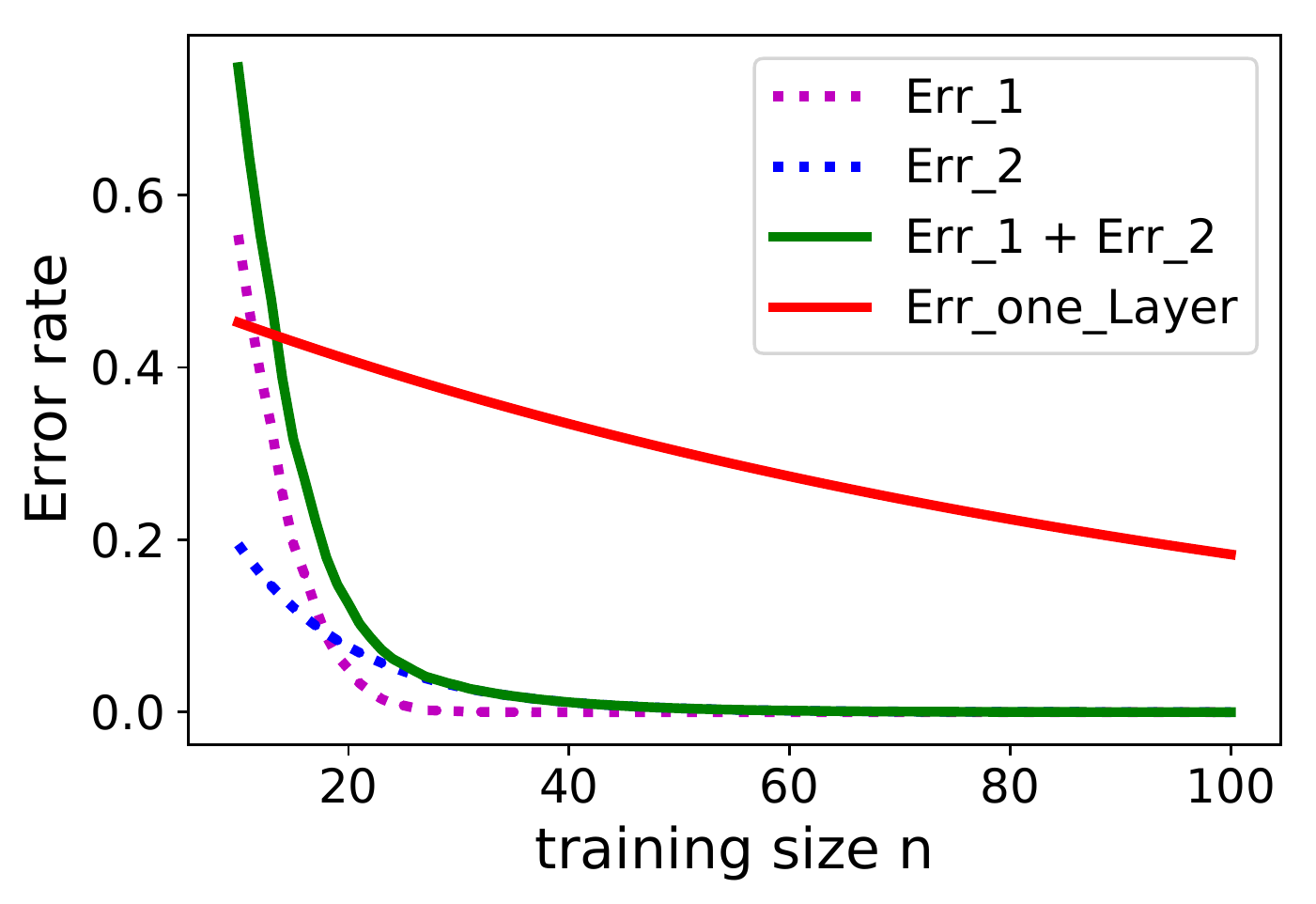}
\label{fig:ana2}
}
\vspace{-0.2cm}
\caption{ Visualizing the calculated/estimated generalization errors. Err\_1 denotes the estimate of $\Prb{\tOtr^c}$, Err\_2 denotes $\frac{1}{2}\left(\frac{d-2k+1}{d}\right)^{n}$ and Err\_one\_Layer denotes $\frac{1}{2}\left(\frac{d-1}{d}\right)^{n}$, where we set $d=100$, $k=5$, and each estimate for $\Prb{\tOtr^c}$ is based on repeatedly sampling $n$ points from $\uni\iset{d}$ for 10,000 times. 
}
\vspace{-0.5cm}
\label{fig:ana}
\end{figure}
%\end{minipage}
%\end{wrapfigure}

\subsubsection{Comparing with Model-$1$-Layer}

We compare the second term of Theorem~\ref{thm:err-decomp}, which is approximately $\frac{1}{2}\left(\frac{d-2k+1}{d}\right)^{n}$, with the generalization error of Model-$1$-Layer. Assume all elements in the single layer weights are initialized independently with some distribution centering around $0$. In each step of gradient descent  $w_i$ is updated only if $x=\pm e_i$ is in the training set. So for any $(x,y)$ in the whole dataset, it is guaranteed to be correctly classified only if  $\pm e_i$ appears in the training set, otherwise it has only a half chance to be correctly classified due to random initialization. Then the generalization error can be written as
%\begin{align*}
$\Err_{\mathrm{1Layer}} = \frac{1}{2}\EE{l\sim \uni\iset{d} }{\Prb{\forall l'\in \Str, l'\ne l}}  = \frac{1}{2}\left(\frac{d-1}{d}\right)^n $
%\,.
%\end{align*} 
The two error rates are the same when $k=1$, which is expected, and $\frac{1}{2}\left(\frac{d-2k+1}{d}\right)^{n}$ is smaller when $k>1$. 

To see how much we save on the sample complexity by using Model-Conv-$k$ to achieve a certain error rate $\epsilon$ we let $\frac{1}{2}\left(\frac{d-2k+1}{d}\right)^{n}=\epsilon$, which gives
$
n =  \frac{1}{\log d - \log (d-2k+1)}\log\frac{1}{2\epsilon} 
$
and $
\lim_{d\rar\infty} \frac{n}{d} 
 = \frac{1}{2k-1}\log\frac{1}{2\epsilon}$.  So the sample complexity for using Model-Conv-$k$ is approximately $\frac{d}{2k-1}\log\frac{1}{2\epsilon}$ when $k\ll d$ while we need $d\log\frac{1}{2\epsilon}$ samples for Model-$1$-Layer. Model-Conv-$k$ requires approximately $2k-1$ times fewer samples when $k\ll d$ and $\epsilon$ is small enough such that the first part in Theorem~\ref{thm:err-decomp} is negligible.  

Now take the first term in Theorem~\ref{thm:err-decomp} into consideration by adding up the empirically estimated $\Prb{\tOtr^c}$ and $\frac{1}{2}\left(\frac{d-2k+1}{d}\right)^{n}$ as an upper bound for $\Err^\infty_{\mathrm{Convk}}$ then compare the sum with $\Err_{\mathrm{1Layer}} $. Figure~\ref{fig:ana2} shows that the estimated upper bound for $\Err^\infty_{\mathrm{Conv5}}$ is clearly smaller than $\Err_{\mathrm{1Layer}} $ when $n$ is not too small. This difference is well aligned with our empirical observation in Figure~\ref{fig:gcomp1} where the two models perform similarly when $n$ is small and Model-Conv-$k$ outperforms Model-$1$-Layer when $n$ grows larger. 

Theorem~\ref{thm:err-decomp} and Lemma~\ref{lemma:primitive} show that when there exist $l, l'\in \Str$ such that $|l-l'|=1$ then the training samples in $\Dtr$ generalize to their $k$-neighbors. We argue that this generalization bias itself requires some samples to be built up, which means that achieving $k$-neighbors generalization requires some condition hold for $\Str$. Having $l, l'\in \Str$ such that $|l-l'|=1$ is a sufficient condition but not a requirement. Now we derive a necessary condition for this generalization advantage:

\begin{proposition}
If for all $l\in \Str$ we have $l\ge k$ and for any $l, l'\in \Str$ we have $|l-l'|\ge 2k$ then this $k$-neighbor generalization does not hold for Conv-$k$ in Task-Cls. Actually, under this condition and $w_1^0 \sim \normal(0, b^2I)$, there is no generalization advantage for Model-Conv-$k$ compared to Model-$1$-Layer.
\label{prop:nece-cond}
\end{proposition}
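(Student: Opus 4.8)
\emph{Proof plan.} The key structural fact I would establish first is that under the two separation conditions the Gram matrix $\Mtr^T\Mtr$ collapses to a scalar multiple of the identity, so all $k$ singular values of $\Mtr$ coincide. Writing each training point as $\pm e_l$, the $j$-th column of $A_{e_l}$ is the standard basis vector $e_{l-j+1}$, which stays in range for every $j\in\{1,\dots,k\}$ precisely because the hypothesis forces $l\ge k$; hence the $j$-th column of $\Mtr$ equals $\frac{1}{\ntr}\sum_{l\in\Str}c_l\,e_{l-j+1}$, where $c_l$ is the multiplicity of position $l$ in $\Dtr$. Taking inner products of columns gives $(\Mtr^T\Mtr)_{jj'}=\frac{1}{\ntr^2}\sum_{l,l'\in\Str}c_l c_{l'}\,\I{l-l'=j-j'}$. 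For $j\ne j'$ a nonzero entry would require two \emph{distinct} positions in $\Str$ at distance $|j-j'|\le k-1<2k$, which the second hypothesis forbids; for $j=j'$ it reduces to $\lambda\doteq\frac{1}{\ntr^2}\sum_{l\in\Str}c_l^2>0$. Therefore $\Mtr^T\Mtr=\lambda I_k$, every singular value equals $\sqrt\lambda$, and $m=k$ in Lemma~\ref{lemma:w-asym}.

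Next I would substitute $m=k$ into Lemma~\ref{lemma:w-asym}. Since $V$ is a $k\times k$ orthogonal matrix, $w_1^\infty=VV^Tw_1^0=w_1^0$, and using $\Mtr=\sqrt\lambda\,UV^T$ we get $w_2^\infty=UV^Tw_1^0=\Mtr w_1^0/\sqrt\lambda$ (both independent of the SVD choice). For a test point $x=\pm e_l$ we have $M_{x,y}=A_{e_l}$, so
\[
\F^\infty=\frac{1}{\sqrt\lambda}\,(w_1^0)^T A_{e_l}^T \Mtr\, w_1^0,
\]
a quadratic form in $w_1^0$ whose matrix is a positive combination of the $A_{e_l}^T A_{e_{l''}}$ over $l''\in\Str$. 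A direct computation gives $(A_{e_l}^T A_{e_{l''}})_{jj'}=\I{j\le l}\,\I{j-j'=l-l''}$, so $l''$ contributes only when $|l-l''|\le k-1$. If $l\in\Str$, separation leaves $l''=l$ as the sole contributor and $A_{e_l}^T A_{e_l}=I_k$ (all $k$ columns are in range because $l\ge k$), hence $\F^\infty=\frac{c_l}{\ntr\sqrt\lambda}\norm{w_1^0}^2>0$ almost surely and $\pm e_l$ is classified correctly, exactly as Model-$1$-Layer does for a covered position.

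The crux is the case $l\notin\Str$, and this is the step I expect to be the main obstacle. Separation guarantees that at most one training position $l''$ lies within distance $k-1$ of $l$, and since $l\ne l''$ the offset $\delta\doteq l-l''$ is nonzero; hence the quadratic form's matrix $B$ is a scalar times a single-band matrix supported on $\{j-j'=\delta\}$ (possibly truncated by $\I{j\le l}$), and in particular it has zero diagonal. Its symmetric part $S$ is therefore a zero-trace matrix supported on the bands $j-j'=\pm\delta$. I would then exhibit a coordinate sign-flip reversing the form: because the graph on $\{1,\dots,k\}$ whose edges join indices at distance $|\delta|$ is a disjoint union of paths (hence bipartite), the explicit $2$-coloring $\epsilon_j=(-1)^{\lfloor(j-1)/|\delta|\rfloor}\in\{\pm1\}$ satisfies $\epsilon_j\epsilon_{j'}=-1$ whenever $|j-j'|=|\delta|$, so $\operatorname{diag}(\epsilon)\,S\,\operatorname{diag}(\epsilon)=-S$ (the truncation only deletes edges, preserving bipartiteness, so the same $\epsilon$ works). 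Since $w_1^0\sim\normal(0,b^2I_k)$ is invariant under flipping the signs of any subset of coordinates, $\F^\infty$ and $-\F^\infty$ are equal in distribution, giving $\Prb{\F^\infty>0}=\Prb{\F^\infty<0}$ and $\EE{w_1^0}{\bErr(\F^\infty)}=\Prb{\F^\infty<0}+\tfrac12\Prb{\F^\infty=0}=\tfrac12$, regardless of whether $\Prb{\F^\infty=0}$ vanishes. (If no training position is within $k-1$ of $l$, then $B=0$, $\F^\infty=0$, and the error is again $\tfrac12$.)

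Finally I would assemble the two cases: for any training set whose $\Str$ satisfies the hypotheses, every covered position contributes $0$ and every uncovered position contributes $\tfrac12$, so the initialization-averaged asymptotic error satisfies $\EE{w_1^0}{\Err(w^\infty,\D)}=\frac{1}{2}\cdot\frac{d-|\Str|}{d}$. This is exactly the Model-$1$-Layer error on the same training set, where only the positions in $\Str$ are classified deterministically and the rest at chance. Hence no generalization advantage remains and the $k$-neighbor generalization fails. Everything beyond the symmetry argument is bookkeeping with the combinatorics of $\Str$.
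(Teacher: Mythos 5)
Your proof is correct and follows essentially the same route as the paper's own proof: establish $\Mtr^T\Mtr \propto I_k$ so that $m=k$ in Lemma~\ref{lemma:w-asym}, reduce $yf_{w^\infty}(\pm e_l)$ to a banded quadratic form in $w_1^0$, and handle positions outside $\Str$ by a coordinate sign-flip symmetry of the Gaussian initialization (the paper's ``choose one endpoint of each edge'' set is exactly your bipartite $2$-coloring). Your write-up is in fact slightly more careful on a few points the paper glosses over --- explicit multiplicities $c_l$, the truncation $\I{j\le l}$ for test positions $l<k$, and the possible atom at zero --- but these are refinements of the same argument, not a different one.
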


Proposition~\ref{prop:nece-cond} states that when the training samples are too sparse Model-Conv-$k$ provides the same generalization performance as Model-$1$-Layer. Together with Theorem~\ref{thm:err-decomp} and Lemma~\ref{lemma:primitive} our results reveal a very interesting fact that, unlike traditional regularization techniques, the generalization bias here requires a certain amount of training samples before saving the sample complexity effectively.
\section{Experiments}
\label{sec:exp}

In this section we empirically investigate the relationship between our analysis and the actual performance in experiments (Figure~\ref{fig:gcomp}). Recall that we made two major surrogates during our analysis: (i) We consider the extreme hinge loss $\loss(w;x,y)=-yf_w(x)$ instead of the typically used $\loss(w;x,y)=(1-yf_w(x))_+$. (ii) We consider the asymptotic weights $w^\infty$ instead of $w^t$. Now we study the difference caused by these surrogates. We compare the following three quantities: (a) the empirical estimate for the asymptotic error $\Err^\infty_{\mathrm{Convk}}$ using Theorem~\ref{thm:asym-err-bound} by computing SVD of sampled $\Mtr$s, (b) the test errors by training with the extreme hinge loss and (c) the real hinge loss. \footnote{We also tried cross entropy loss with sigmoid and found no much difference from using the hinge loss.} The results are shown in Figure~\ref{fig:acomp}. 

%\vspace{-0.1cm}
%\begin{wrapfigure}{R}{0.6\textwidth}
%\vspace{-1.9cm}
%\begin{minipage}{0.59\textwidth}
\begin{figure}[h]
\vskip -0.1in
\centering
\subfigure[Task-Cls]{
\includegraphics[width=0.45\columnwidth]{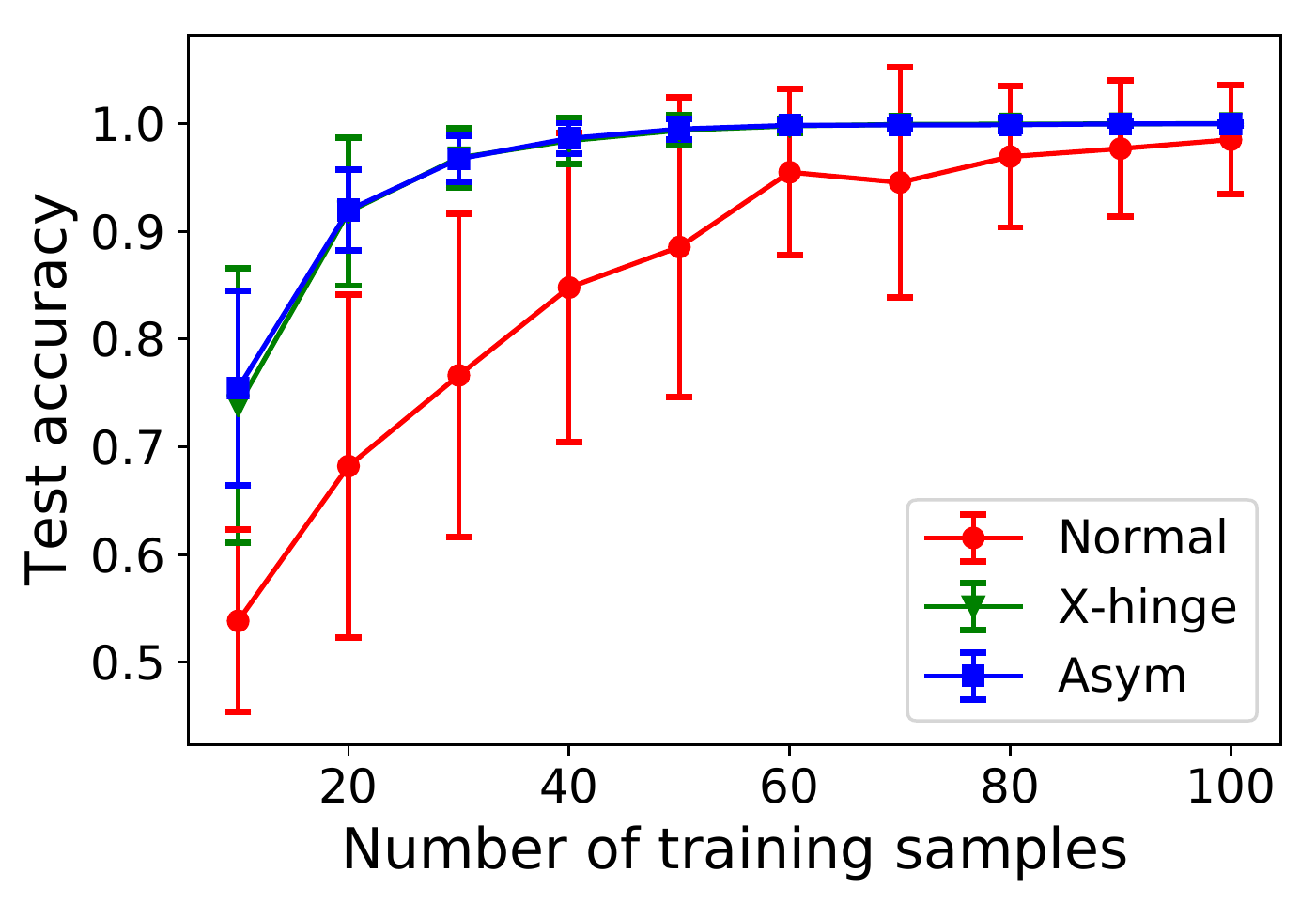}
\label{fig:acomp1}
}
\subfigure[Task-1stCtrl]{
\includegraphics[width=0.45\columnwidth]{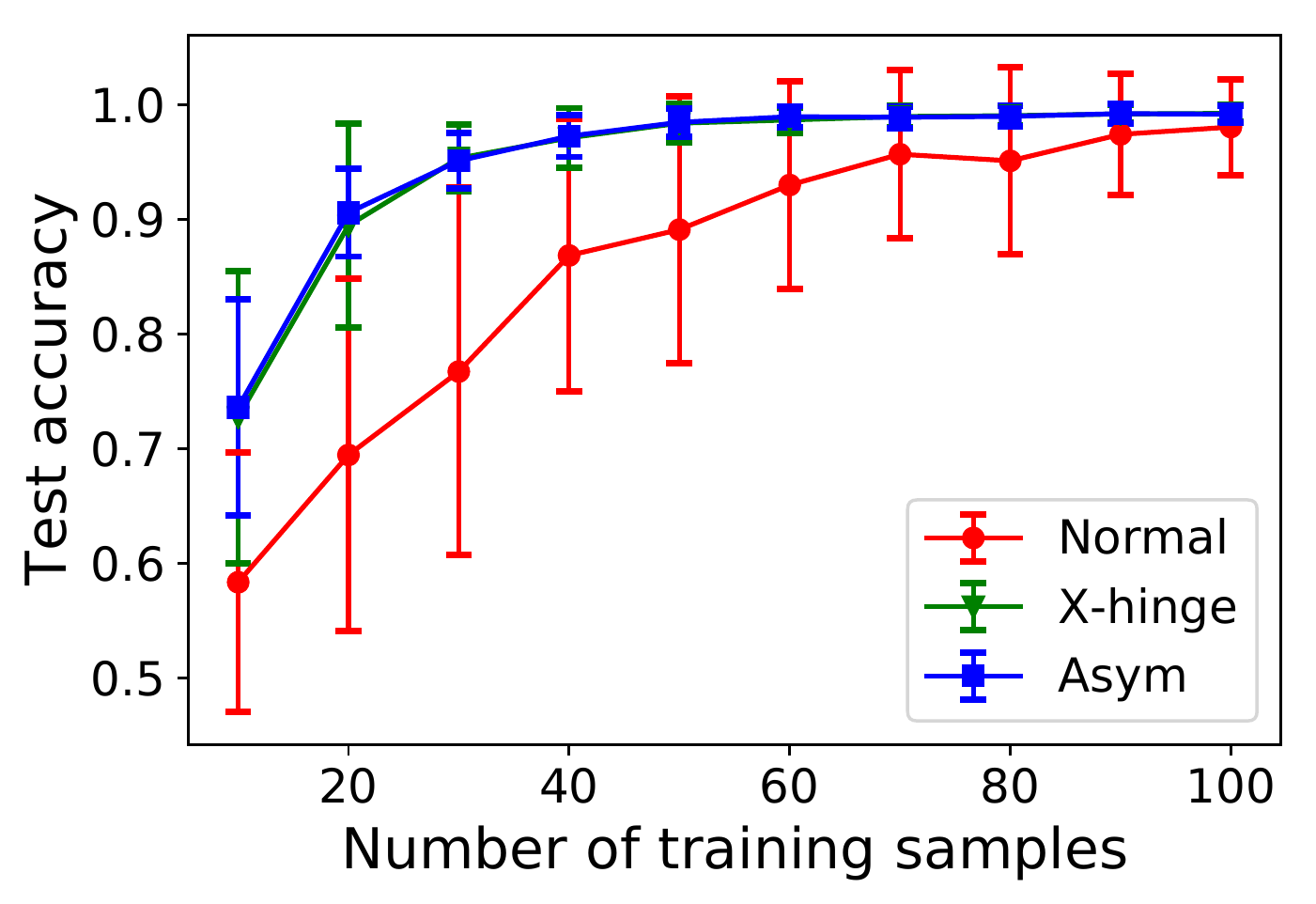}
\label{fig:acomp2}
}
\subfigure[Task-3rdCtrl]{
\includegraphics[width=0.45\columnwidth]{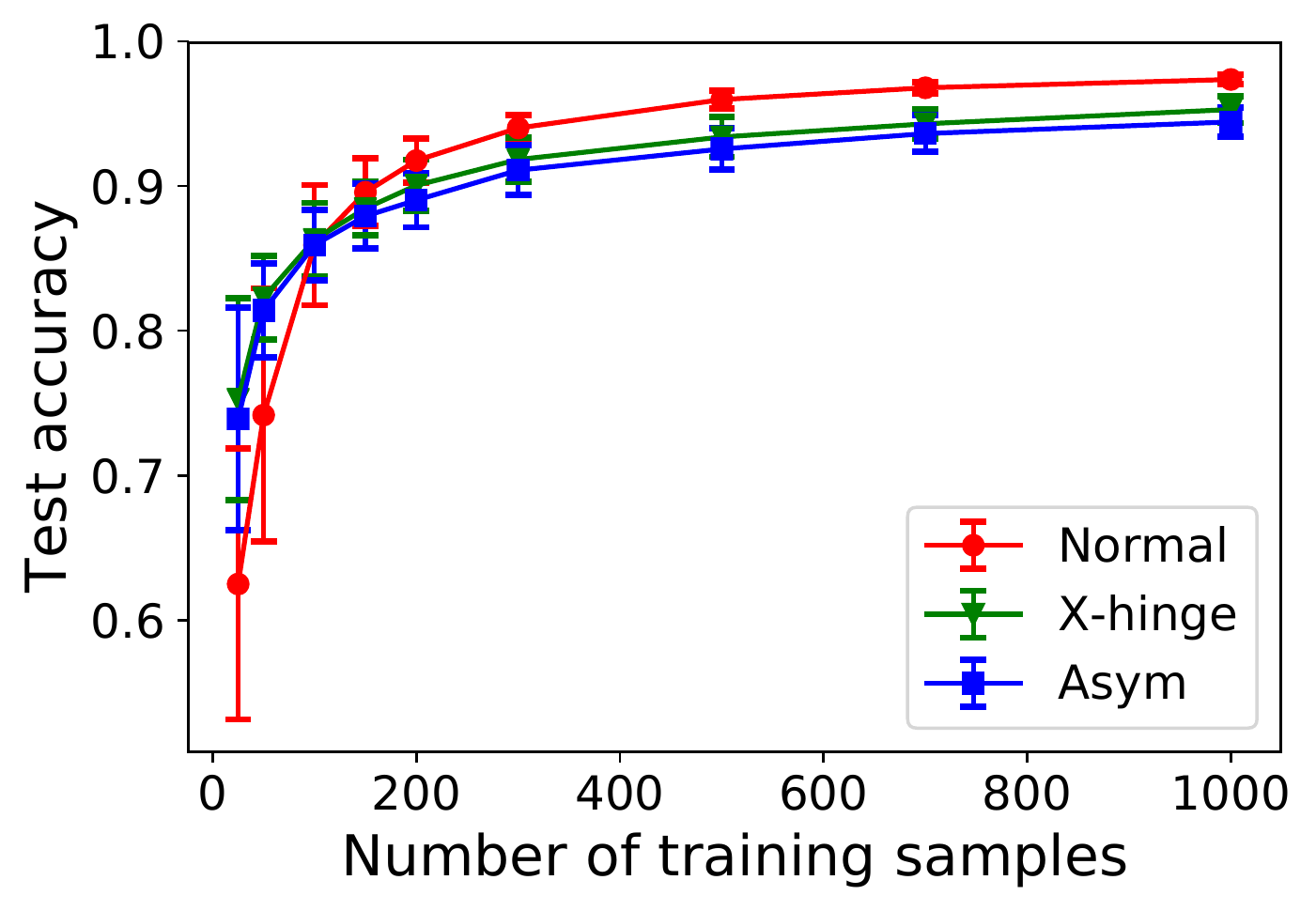}
\label{fig:acomp3}
}
\caption{
Comparing estimated asymptotic error (Asym) v.s. finite time extreme hinge loss (X-hinge) v.s. normal hinge loss (Normal) with different sizes of training data. For using normal hinge loss training stops when training loss goes to $0$ while for extreme hinge loss we train the model for 1000 steps. The other settings remain the same as the experiments shown in Figure~\ref{fig:gcomp}. 
}
\label{fig:acomp}
\vskip -0.1in
\end{figure}
%\end{minipage}
%\vspace{-0.2cm}
%\end{wrapfigure}

It can be seen from Figure~\ref{fig:acomp} that there is not much difference between the estimated quantity in Theorem~\ref{thm:asym-err-bound} by SVD and the actual test error by training with the extreme hinge loss $\loss(w;x,y)=-yf_w(x)$, which verifies our derivation in Section~\ref{sec:ana}. It is also shown that, especially in Task-Cls and Task-1stCtrl, the asymptotic quantity can be viewed as an upper confidence bound for the actual performance with the normal hinge loss. The asymptotic quantity has a much lower variance which only comes from the randomization of the training set so the high variance with the normal hinge loss is caused by random initialization and good initializations would perform closer to the asymptotic quantity than the bad ones. To verify this, we fix the training set and compare the performance of the two losses at each training step $t$ with difference initial weights $w^0$. Figure~\ref{fig:winit1}
\footnote{Results for the other two tasks are put in the appendix.} shows the convergence of training/test accuracies with difference losses. With the normal hinge loss, the test performance remains the same once the training loss reaches $0$. With the extreme hinge loss, the test performance is still changing even after the training data is fitted and eventually converges to $\Err^\infty_{\mathrm{Convk}}$. As we can see, there is a difference in how fast the direction of weight $w^t$ converges (in terms of test accuracy) to its limit $w^\infty$ defined in Lemma~\ref{lemma:w-asym} with different initialization when using the extreme hinge loss. We further argue that this variance is strongly correlated with the variance in the generalization performance under the normal hinge loss, as shown in Figure~\ref{fig:winit2}, from which we can see how well a model trained using the normal hinge loss with some $w^0$ generalizes depends on how fast $w^t$ converges to its limit direction using the extreme hinge loss. %This correlation means that we may be able to tell how good a random initialization is in practice by analyzing its finite time convergence under the extreme hinge loss. 

%\begin{wrapfigure}{R}{0.55\textwidth}
%\vspace{-0.4cm}
%\begin{minipage}{0.54\textwidth}
\begin{figure}[h]
\vskip -0.1in
\centering
\subfigure[Convergence with $t$.]{
\includegraphics[width=0.45\columnwidth]{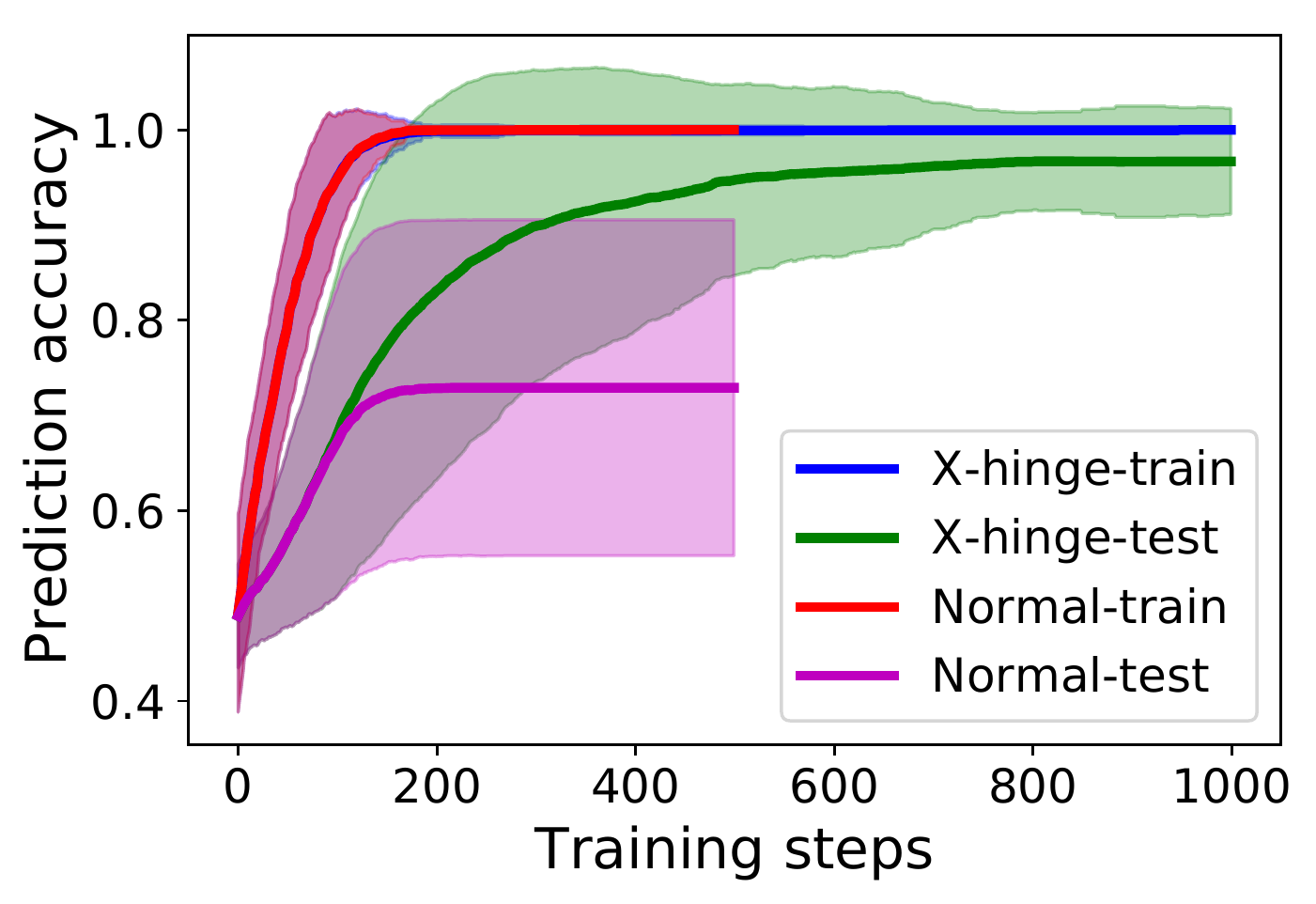}
\label{fig:winit1}
}
\subfigure[Correlation at $t=150$.]{
\includegraphics[width=0.45\columnwidth]{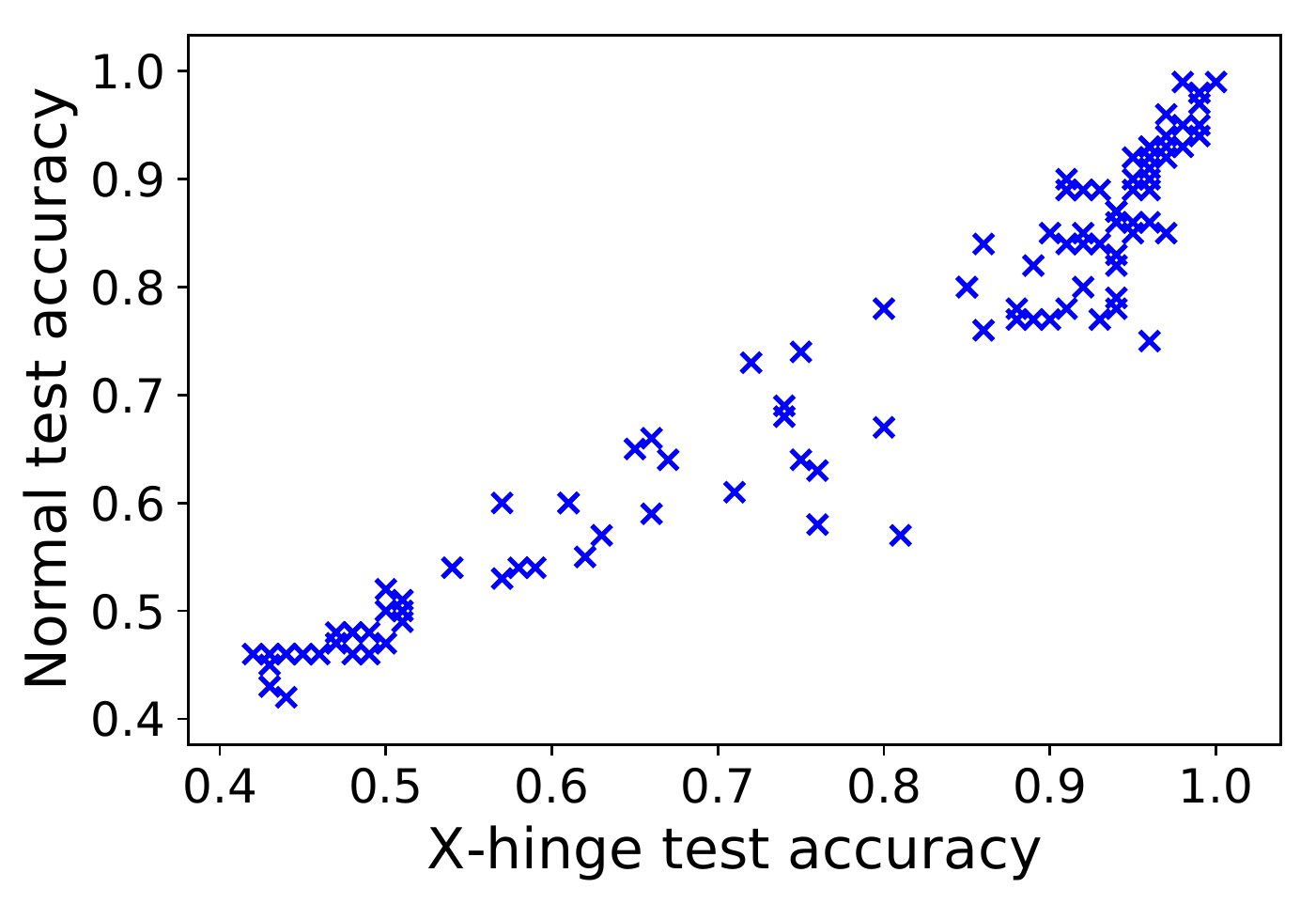}
\label{fig:winit2}
}
\caption{
The effect of weight initialization in Task-Cls. We fix $d=100$, $n=30$ and train Model-Conv-$5$ with $100$ different random initializations using both losses. $w^0$ is uniformly sampled from $[-b, b]^{d+k}$. 
}
\label{fig:winit}
\vskip -0.1in
\end{figure}
%\end{minipage}
%\vspace{-0.5cm}
%\end{wrapfigure}

One may wonder that whether X-hinge always generalizes better than the normal hinge under gradient descent as in Task-Cls. However, this is not true in Task-3rdCtrl, where the limit direction is better when $\ntr$ is small but worse when $\ntr$ is large, according to Figure~\ref{fig:acomp3}. The reason is that the limit direction $w^\infty$ may not be able to separate the training set\footnote{See appendix for an example.}. This indicates that the potential generalization ``benefit" from the convolution layer may actually be a bias.  
%Investigating how the bias depends on the data distribution would be an interesting future direction.
\vspace{-0.3cm}
\section{Related Work}
\label{sec:rel}

Among all recent attempts that try to explain the behavior of deep networks our work is distinct in the sense that we study the generalization performance that involves the interaction between gradient descent and convolution. For example, \cite{du2017convolutional} study how gradient descent learns convolutional filters but they focus on optimization instead of generalization. 
Several recent works study the generalization bias of gradient descent \citep{hardt2015train, dziugaite2017computing, brutzkus2017sgd, soudry2017implicit} but they are not able to explain the advantage of convolution in our examples. \cite{hardt2015train} bounds the stability of stochastic gradient descent within limited number of training steps. 
\cite{dziugaite2017computing} proposes a non-vacuous bound that relies on the stochasticity of the learning process. Neither limited number of training steps or stochasticity is necessary to achieve better generalization in our examples. 
Similarly to our work, \cite{soudry2017implicit} study the convergence of $w/\norm{w}_2$ under gradient descent. However, their work is limited to single layer logistic regression and their result shows that the linear separator converges to the max-margin one, which does not indicate good generalization in our cases. 
\cite{gunasekar2018implicit} also study the limit directions of multi-layer linear convolutional classifiers under gradient descent. Their result is not directly applicable to ours since they consider loopy convolutional filters with full width $k=d$ while we consider filters with $k\ll d$ and padding with $0$. Our setting of filters is closer to what people use in practice. Moreover, \cite{gunasekar2018implicit} does not provide any generalization analysis while we show that the limit direction of the convolutional linear classifier provides significant generalization advantage on some specific tasks. \cite{brutzkus2017sgd} shows that optimizing an over-parametrized 2-layer network with SGD can generalize on linearly separable data. Their work is limited to only training the first fully connected layer while we study jointly training two layers with convolution. 
Another thread of work \citep{bartlett2017spectrally, neyshabur2017pac, neyshabur2017exploring} tries to develop novel complexity measures that are able to characterize the generalization performance in practice. These complexities are based on the margin, norm or the sharpness of the learned model on the training samples.  Taking Task-Cls as an example, the linear classifier with the maximum margin or minimum norm will place $0$ on the weights where there are no training samples, which is undesirable in our case, while the sharpness of the learned model in terms of training loss contains no information about how it behaves on unseen samples. So none of these measures can be applied to our scenario. \citep{fukumizu1999generalization, saxe2013exact, pasa2014pre, advani2017high} study the dynamics of linear network but these results do not apply in our case due to difference loss and network architecture: \citep{fukumizu1999generalization, saxe2013exact, advani2017high} study fully connected networks with L2 regression loss  while \cite{pasa2014pre} considers recurrent networks with reconstruction loss.

%\if0
\section{Conclusion}
\label{sec:conc}   

We analyze the generalization performance of two layer convolutional linear classifiers trained with gradient descent. Our analysis is able to explain why, on some simple but realistic examples, adding a convolution layer can be more favorable than just using a single layer classifier even if the data is linearly separable. Our work can be a starting point for several interesting future direction: 
(i) 
Closing the gaps in normal hinge loss v.s. the extreme one as well as asymptotic analysis v.s. finite time analysis. The latter may be able to characterize how good a weight initialization is.
% are good. Although strongly correlated, there is still a gap between using the normal hinge loss and the extreme one. The next step may be to have a finite time analysis of the extreme hinge loss, which will be closer to the normal hinge loss (Figure~\ref{fig:winit1}) and able to characterize which weight initializations are good or bad. 
%Another question is how optimizing the normal hinge loss can be viewed as a balance between being consistent with the training data and exploiting the bias provided by the gradient dynamics under the extreme hinge loss, as observed in Task-3rdCtrl (Figure~\ref{fig:acomp3}). 
(ii) Another interesting question is how we can interpret the generalization bias as a prior knowledge. We conjecture that the jointly trained filter works as a data adaptive bias as it requires a certain amount of data to provide the generalization bias (supported by Proposition~\ref{prop:nece-cond}. 
(iii) Other interesting directions include studying the choice of $k$, making practical suggestions based on our analysis and bringing in more factors such as feature extraction, non-linearity and pooling.  
%\fi

%\subsubsection*{References}

\newpage
\bibliographystyle{apalike}
\bibliography{aistats_2019}

\newpage
\appendix
%\section{Proof of Lemma~\ref{lemma:w}}

\section{Proof of Lemma~\ref{lemma:w-asym}}

We first introduce the following Lemma, which shows that $w_1^t$ and $w_2^t$ can be written in closed-forms in terms of $(w_1^0, w_2^0, \Mtr, \alpha, t)$:

\begin{lemma}
\label{lemma:w}
Let $\Mtr=U\Sigma V^T$ be (any of) its SVD such that $U\in \real^{d\times k}, \Sigma\in \real^{k\times k}, V\in \real^{k\times k}$, $U^TU=V^TV=VV^T=I$. Then for any $t\ge 0$
\begin{align*}
& w_1^t = \frac{1}{2}V\left(\Lambda^{+,t} V^T w_1^0
+ \Lambda^{-,t} U^T w_2^0 \right) \,,\\
& w_2^t = \frac{1}{2}U\left(\Lambda^{-,t} V^T w_1^0
+ \Lambda^{+,t} U^T w_2^0\right)  - UU^T w_2^0 + w_2^0 \,. \addeq\label{eq:weights}
\end{align*} 
where we define $\Lambda^{+,t} = (I+\alpha\Sigma)^t + (I-\alpha\Sigma)^t$ and 
$\Lambda^{-,t} = (I+\alpha\Sigma)^t - (I-\alpha\Sigma)^t$.

\end{lemma}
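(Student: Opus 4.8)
The plan is to pass to the singular coordinates of $\Mtr$, where the coupled two-layer update decouples into a symmetric scalar recurrence. Write $\Mtr = U\Sigma V^T$ and set $p^t = V^T w_1^t \in \real^k$ and $q^t = U^T w_2^t \in \real^k$. Using $\Mtr^T = V\Sigma U^T$ together with the orthonormality relations $V^TV=I$ and $U^TU=I$, the gradient descent updates $w_1^{t+1}=w_1^t+\alpha\Mtr^T w_2^t$ and $w_2^{t+1}=w_2^t+\alpha\Mtr w_1^t$ become, after left-multiplying by $V^T$ and $U^T$ respectively,
\begin{align*}
p^{t+1} = p^t + \alpha\Sigma q^t\,, \qquad q^{t+1} = q^t + \alpha\Sigma p^t\,.
\end{align*}
This is the crux of the reduction: the $d$-dimensional, coupled dynamics collapse to a symmetric $k$-dimensional system driven only by the diagonal $\Sigma$.

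First I would solve this reduced system. Since $\Sigma$ is diagonal the recurrence decouples across coordinates, and forming the combinations $p^t+q^t$ and $p^t-q^t$ diagonalizes the coupling, giving
\begin{align*}
(p+q)^{t} = (I+\alpha\Sigma)^t (p^0+q^0)\,, \qquad (p-q)^{t} = (I-\alpha\Sigma)^t (p^0-q^0)\,.
\end{align*}
Averaging and differencing these two identities, and then substituting $p^0=V^Tw_1^0$ and $q^0=U^Tw_2^0$, yields the closed forms $p^t = \tfrac12(\Lambda^{+,t}V^Tw_1^0 + \Lambda^{-,t}U^Tw_2^0)$ and $q^t = \tfrac12(\Lambda^{-,t}V^Tw_1^0 + \Lambda^{+,t}U^Tw_2^0)$, with $\Lambda^{\pm,t}$ exactly as defined in the statement.

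The remaining step is to invert the coordinate change, and this is where the main obstacle lies. Because $V$ is square with $VV^T=I$, we recover $w_1^t = Vp^t$ directly, which gives the claimed formula for $w_1^t$. The map $w_2\mapsto U^Tw_2$, however, is not invertible, since $U\in\real^{d\times k}$ with $d>k$ and $UU^T\ne I$; reconstructing $w_2^t$ from $q^t$ recovers only the component $UU^Tw_2^t$ lying in the column space of $U$. The key observation that closes the gap is that the orthogonal component $r^t \doteq (I-UU^T)w_2^t$ is invariant under the dynamics: since $UU^T\Mtr = \Mtr$ (because $\Mtr = U\Sigma V^T$ and $U^TU=I$), we have $(I-UU^T)\Mtr = 0$, so $r^{t+1} = r^t + \alpha(I-UU^T)\Mtr w_1^t = r^t$, hence $r^t = r^0 = (I-UU^T)w_2^0$. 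Writing $w_2^t = Uq^t + r^0 = Uq^t - UU^Tw_2^0 + w_2^0$ and inserting the expression for $q^t$ produces exactly the stated formula. I would conclude by noting that a direct induction on $t$ also works but is more opaque; the coordinate reduction makes transparent both the $\Lambda^{\pm,t}$ structure and the persistent initialization term $-UU^Tw_2^0+w_2^0$, which is precisely the part of $w_2^0$ lying outside the range of $\Mtr$ and therefore untouched by gradient descent.
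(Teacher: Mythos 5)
Your proof is correct, and it takes a genuinely different route from the paper's. The paper proves the lemma by induction on $t$: it records the recursions $\Lambda^{+,t+1}=\Lambda^{+,t}+\alpha\Sigma\Lambda^{-,t}$ and $\Lambda^{-,t+1}=\Lambda^{-,t}+\alpha\Sigma\Lambda^{+,t}$, checks the base case $t=0$, and then verifies that the stated closed form reproduces itself under one gradient step. That argument is short and mechanical, but it presupposes the formula and offers no account of where it comes from. Your argument is constructive: passing to singular coordinates $p^t=V^Tw_1^t$, $q^t=U^Tw_2^t$ reduces the dynamics to the symmetric system $p^{t+1}=p^t+\alpha\Sigma q^t$, $q^{t+1}=q^t+\alpha\Sigma p^t$, which the sum/difference combinations diagonalize into $(p\pm q)^t=(I\pm\alpha\Sigma)^t(p^0\pm q^0)$ — this is exactly why the $\Lambda^{\pm,t}$ structure appears. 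Your handling of the only delicate point is also right: $V$ is square orthogonal so $w_1^t=Vp^t$ inverts directly, whereas $U^T$ is not invertible, and you close the gap by showing $(I-UU^T)\Mtr=0$, so the component $(I-UU^T)w_2^t$ is a conserved quantity equal to $(I-UU^T)w_2^0$. This explains conceptually what the paper's formula leaves opaque: the trailing term $-UU^Tw_2^0+w_2^0$ is precisely the part of the initialization orthogonal to the range of $\Mtr$, which gradient descent never updates. The trade-off is the usual one — the paper's induction is the shortest complete verification, while your derivation is longer but yields the formula (and its interpretation) rather than merely certifying it, and it would generalize more transparently, e.g.\ to identifying which directions of weight space are frozen under the dynamics.
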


\begin{proof}

We start with stating the following facts for $\Lambda^{+,t}$ and $\Lambda^{-,t}$:

$\Lambda^{+,0} = 2I$, $\Lambda^{-,0}=0$ and for any $t\ge 0$
\begin{align*}
& \Lambda^{+, t+1} = \Lambda^{+,t} + \alpha\Sigma\Lambda^{-,t} \,,\\
& \Lambda^{-, t+1} = \Lambda^{-,t} + \alpha\Sigma\Lambda^{+,t} \,. 
\end{align*}
Now we prove \eqref{eq:weights} by induction.
When $t=0$, $w_1^0 = VV^Tw_1^0$ and $w_2^0 = UU^Tw_2^0-UU^Tw_2^0 + w_2^0$ so \eqref{eq:weights} holds for $t=0$. Assume Lemma~\eqref{eq:weights} holds for $t$ then consider the next step $t+1$:
\begin{align*}
w_1^{t+1} & = w_1^t + \alpha \Mtr^T w_2^t \\
& = \frac{1}{2}V\left( \Lambda^{+,t}  V^T w_1^0 
 + \Lambda^{-,t} U^T w_2^0 \right) \\
& + \alpha V\Sigma U^T \left(
\frac{1}{2}U\left(\Lambda^{-,t} V^T w_1^0
+ \Lambda^{+,t} U^T w_2^0\right) \right. \\
& \left. - UU^T w_2^0 + w_2^0 \right) \\
& = \frac{1}{2}V\left(
\Lambda^{+,t}  V^T w_1^0 + \Lambda^{-,t} U^T w_2^0 \right.\\
& \left. + \alpha\Sigma \Lambda^{-,t} V^T w_1^0 + \alpha\Sigma \Lambda^{+,t} U^T w_2^0 \right) \\
& = \frac{1}{2}V\left(\Lambda^{+,t+1} V^T w_1^0
+ \Lambda^{-,t+1} U^T w_2^0 \right) \,.
\end{align*}
Similarly, we can show
\begin{align*}
w_2^{t+1}
& = w_2^t + \alpha \Mtr w_1^t\\
& = \frac{1}{2}U\left(\Lambda^{-,t+1} V^T w_1^0
+ \Lambda^{+,t+1} U^T w_2^0\right)\\
&  - UU^T w_2^0 + w_2^0 \,.
\end{align*}
Thus \eqref{eq:weights} holds for all $t\ge 0$.
\end{proof}

\begin{proof}[Proof of Lemma~\ref{lemma:w-asym}]
Taking $w_2^0=0$ in Lemma~\ref{lemma:w} we can write $w_1^t = \frac{1}{2}V\Lambda^{+,t}  V^T w_1^0$ and $w_2^t = \frac{1}{2}U \Lambda^{-,t}  V^T w_1^0 $

For $1\le i \le m$, $\sigma_i=\sigma_1$ thus
\begin{align*}
\lim_{t\rar+\infty} \frac{(1+\alpha\sigma_i)^t}{(1+\alpha\sigma_1)^t} = 1 \,. \addeq\label{eq:lim-1}
\end{align*} 
For $m<i\le k$, $\sigma_i<\sigma_1$ thus
\begin{align*}
\lim_{t\rar+\infty} \frac{(1+\alpha\sigma_i)^t}{(1+\alpha\sigma_1)^t} = 0 \,.\addeq\label{eq:lim-2}
\end{align*}
For any $1\le i \le k$, we have
$\frac{1-\alpha\sigma_i}{1+\alpha\sigma_1} \le \frac{1}{1+\alpha\sigma_1} < 1$
and
$\frac{1-\alpha\sigma_i}{1+\alpha\sigma_1} \ge \frac{1-\alpha\sigma_1}{1+\alpha\sigma_1} = -1 + \frac{2}{1+\alpha\sigma_1} > -1$
thus
\begin{align*}
\lim_{t\rar+\infty} \frac{(1-\alpha\sigma_i)^t}{(1+\alpha\sigma_1)^t} = 0\,.\addeq\label{eq:lim-3}
\end{align*}
Applying \eqref{eq:lim-1}---\eqref{eq:lim-3} to compute the limits in \eqref{eq:weights-limit} gives the result in Lemma~\ref{lemma:w-asym}.
\end{proof}

\section{Proof of Theorem~\ref{thm:asym-err-bound}}

\begin{proof}
For any vector $z\in \real^m$ such that $\norm{z}_2=1$, we have 
\begin{align*}
\Mtr V_{:m}z = U\Sigma V^TV_{:m}z = \sigma_1 U_{:m}z \,, \\
\Mtr^T U_{:m}z = V\Sigma U^T U_{:m}z = \sigma_1 V_{:m}z \,,
\end{align*}
Since
\begin{align*}
\norm{V_{:m}z}_2^2=z^TV_{:m}^TV_{:m}z=1\,,\\
\norm{U_{:m}z}_2^2=z^TU_{:m}^TU_{:m}z=1
\end{align*} 
we know that $(U_{:m}z, V_{:m}z)$ is also a pair of left-right singular vectors with singular value $\sigma_1$. Therefore, when $V_{:m}^Tw_1^0\in\real^m$ is non-zero $\left(\frac{U_{:m} V_{:m}^Tw_1^0}{\norm{V_{:m}^Tw_1^0}_2}, \frac{V_{:m} V_{:m}^Tw_1^0}{\norm{V_{:m}^Tw_1^0}_2}\right)$ is also such a pair.  Following \eqref{eq:asym-err-1} we have
\begin{align*}
\frac{\F^\infty(x,y,w_1^0,\Dtr)}{\norm{V_{:m}^Tw_1^0}_2^2} 
& = \left( \frac{V_{:m} V_{:m}^Tw_1^0}{\norm{V_{:m}^Tw_1^0}_2}\right)^T M_{x,y}^T \left( \frac{U_{:m} V_{:m}^Tw_1^0}{\norm{V_{:m}^Tw_1^0}_2}\right)  \\
& \ge \min_{(u,v)\in UV_1^{\Mtr}} v^T M_{x,y}^T u \addeq\label{eq:thm-asym-err-pf-1}
\end{align*}
for any $w_1^0$ such that $V_{:m}^Tw_1^0\ne \vec{0}$.

When $w_1^0\sim\normal(0, b^2I_k)$, for any fixed $V_{:m}$ satisfying $V_{:m}^TV_{:m}=I_m$, the random variable $V_{:m}^Tw_1^0$ also follows a normal distribution:
\begin{align*}
\E{V_{:m}^Tw_1^0 (V_{:m}^Tw_1^0)^T} = V_{:m}^T\E{w_1^0{w_1^0}^T} V_{:m} =b^2I_m
\end{align*}
hence  $V_{:m}^Tw_1^0 \sim\normal(0, b^2I_m)$.

Applying the fact that $\bErr(\cdot)\le 1$ is non-increasing and $\bErr(\alpha x) = \bErr(x)$ for any $\alpha>0$ we can upper bound \eqref{eq:asym-err-1} by
\begin{align*}
& \Err^\infty_{\mathrm{Convk}}(\D) \\
& = \EE{w_1^0, \Dtr, (x,y)}{\bErr\left(\F^\infty(x,y,w_1^0,\Dtr)\right)} \\
& = \EE{\Dtr, (x,y)}{\EE{w_1^0}{\bErr\left(\F^\infty(x,y,w_1^0,\Dtr)\right)}} \\
& = \EE{\Dtr, (x,y)}{ \Prb{V_{:m}^Tw_1^0 = \vec{0}}\EE{w_1^0}{\bErr\left(\F^\infty\right)\big|V_{:m}^Tw_1^0 = \vec{0}} 
\right. \\
& \left. + \Prb{V_{:m}^Tw_1^0 \ne \vec{0}}\EE{w_1^0}{\bErr\left(\F^\infty\right)\big|V_{:m}^Tw_1^0 \ne \vec{0}}} \\
& = \EE{\Dtr, (x,y)}{\EE{w_1^0}{\bErr\left(\frac{\F^\infty}{\norm{V_{:m}^Tw_1^0}_2^2}\right)\big|V_{:m}^Tw_1^0 \ne \vec{0}}} \\
& \le \EE{\Dtr, (x,y)}{\bErr\left(\min_{(u,v)\in UV_1^{\Mtr}} v^T M_{x,y}^T u \right) } \,.
\end{align*}%\todoy{Briefly explain this quantity}
\end{proof}

\section{Perron-Frobenius Theorem}

Let $A\in \real^{k\times k}$ be a non-negative square matrix\footnote{\url{https://en.wikipedia.org/wiki/Perron-Frobenius\_theorem} .}:\vspace{-.1in}
\begin{itemize}\addtolength\itemsep{-.5em}
\item{
\textbf{Definition:} $A$ is \emph{primitive} if there exists a positive integer $t$ such that $A^t_{ij}>0$ for all $i,j$. 
}
\item{
\textbf{Definition:} $A$ is \emph{irreducible} if for any $i,j$ there exists a positive integer $t$ such that $A^t_{ij}>0$.
}
\item{
\textbf{Definition:} Its \emph{associated graph} $\G_A=(V,E)$ is defined to be a directed graph with $V=\{1,...,k\}$ and $(i,j)\in E$ iff $A_{ij}\ne 0$. $\G_A$
 is said to be \emph{strongly connected} if for any $i,j$ there is path from $i$ to $j$.
}
\item{
\textbf{Property:} $A$ is irreducible iff $\G_A$
 is strongly connected.
}
\item{
\textbf{Property:} If $A$ is irreducible and has at least one non-zero diagonal element then $A$ is primitive.
}
\item{
\textbf{Property:} If $A$ is primitive then its first eigenvalue is unique ($\lambda_1>\lambda_2$) and the corresponding eigenvector is all-positive (or all-negative up to sign flipping).
}
\end{itemize}

\section{Proof of Theorem~\ref{thm:err-decomp}}

\begin{proof}
Following \eqref{eq:asym-err-1} and let 
\[
\hErr(x,y,\Dtr)= \bErr\left(\min_{(u,v)\in UV_1^{\Mtr}} v^T M_{x,y}^T u \right) \le 1
\] we have
\begin{align*}
& \Err^\infty_{\mathrm{Convk}}(\D) 
 \le \EE{\Dtr, (x,y)}{\hErr(x,y,\Dtr)} \\
& = \EE{\Dtr, (x,y)}{\left(\I{\Omega^c(\Mtr^T\Mtr)} + \I{\Omega(\Mtr^T\Mtr)}\right) \right. \\
& \left. \hErr(x,y,\Dtr)} \\
%& = \EE{\Dtr, (x,y)}{\I{\Omega^c(\Mtr^T\Mtr)}\hErr(x,y,\Dtr)} \\
%& + \EE{\Dtr, (x,y)}{\I{\Omega(\Mtr^T\Mtr)}\hErr(x,y,\Dtr)} \\
& \le \EE{\Dtr}{\I{\Omega^c(\Mtr^T\Mtr)}} \\
& + \EE{\Dtr, (x,y)}{\I{\Omega(\Mtr^T\Mtr)}\hErr(x,y,\Dtr)} \\
& = \Prb{\Omega^c(\Mtr^T\Mtr)} \\
& + \EE{\Dtr, l\sim \uni\iset{d}}{\I{\Omega(\Mtr^T\Mtr)}\hErr(e_l,1,\Dtr)} \addeq\label{eq:pf-decomp1}
\end{align*}

Now look at the second term in \eqref{eq:pf-decomp1}. If $\Mtr^T\Mtr$ is primitive then its first eigenvalue 
$\lambda_1=\sigma_1^2$ is unique ($\sigma_1>\sigma_2$) and the corresponding eigenvector $v$ is all positive (or all negative if we flip the sign of $v$ and $u$, which does not change the sign of $v^TM^Tu$ thus it is safe to assume $v>0$). $u=\Mtr v/\sigma_1$ gives that $u$ is also unique and non-negative. Since $M_{x,y}$ is also non-negative we have $v^T M_{x,y}^T u \ge 0$ for any $x,y$. Therefore,
\begin{align*}
& \hErr(x,y,\Dtr) = \bErr\left( v^T M_{x,y}^T u \right)\\
& = \I{v^T M_{x,y}^T u <0} + \frac{1}{2}\I{v^T M_{x,y}^T u = 0} \\
& = \frac{1}{2}\I{v^T M_{x,y}^T u = 0} \,.
\end{align*}
From $u=\Mtr v/\sigma_1$ and $v>0$ we know that $u_i>0$ iff there exists $1\le j\le k$ such that $(\Mtr)_{i,j}>0$, which is equivalent to that there exists $i\le l<i+k$ such that $l\in \Str$. Also for $x=e_l$ ($y=1$), according to the definition of $M_{x,y}$ and the fact that $v>0$ we have $v^TM_{e_l,1}^T u>0$ iff there exists $l-k<i\le l$ such that $u_i>0$. So we have 
\begin{align*}
v^TM_{e_l,1}^T u> 0 \iff \exists l'\in \bigcup_{l-k<i\le l} [i,i+k) \text{ s.t. } l'\in \Str 
\end{align*}
Since $v^TM_{e_l,1}^T u\ge 0$ and $\bigcup_{l-k<i\le l} [i,i+k) = (l-k, l+k)$ we have
\begin{align*}
v^TM_{e_l,1}^T u = 0 \iff \forall l'\in \Str, |l'-l|\ge k \,. 
\end{align*} 

Now we have proved that, if $\Mtr^T\Mtr$ is primitive then
\begin{align*}
\hErr(e_l,1,\Dtr) = \frac{1}{2}\I{\forall l'\in \Str, |l'-l|\ge k} \,,
\end{align*}
which means that 
\begin{align*}
\I{\Omega(\Mtr^T\Mtr)}\hErr(e_l,1,\Dtr) \le \frac{1}{2}\I{\forall l'\in \Str, |l'-l|\ge k}
\end{align*}
holds for any $\Dtr$. Therefore
\begin{align*}
& \EE{\Dtr, l\sim \uni\iset{d}}{\I{\Omega(\Mtr^T\Mtr)}\hErr(e_l,1,\Dtr)} \\
& \le \frac{1}{2} \EE{\Dtr, l\sim \uni\iset{d}}{\I{\forall l'\in \Str, |l'-l|\ge k}} \\
& = \frac{1}{2} \EE{l\sim \uni\iset{d}}{\Prb{\forall l'\in \Str, |l'-l|\ge k}} 
\end{align*}
which concludes the proof.
\end{proof}

\section{Proof of Lemma~\ref{lemma:primitive}}

\begin{proof}
If $k\le i \le d$ and $i-1, i \in \Str$ then for any $1\le j\le k$ we have $(\Mtr)_{i-j,j}>0$ and $(\Mtr)_{i-j+1,j}>0$, which also means that for any $1\le j<k$ we have $(\Mtr)_{i-j,j}>0$ and $(\Mtr)_{i-j,j+1}>0$. Since every two adjacent columns have at least one common non-zero position what we have is $(\Mtr^T\Mtr)_{j,j+1}>0$ and $(\Mtr^T\Mtr)_{j+1,j}>0$ for all $1\le j < k$. So its associated graph $\G_{\Mtr^T\Mtr}$ is strongly connected thus $\Mtr^T\Mtr$ is irreducible. It is also true that all diagonal elements of $\Mtr^T\Mtr$ are positive since every column of $\Mtr$ must contain at least one non-zero element. Now we have proved that $\Mtr^T\Mtr$ is primitive because it is irreducible and has at least one non-zero element on its diagonal.    
\end{proof} 

\section{Proof of Proposition~\ref{prop:nece-cond}}

\begin{proof}
Let $n=|\Str|$. Then given the conditions in this proposition we can see that any column in $\Mtr$ has exactly $n$  non-zero entries with value $1/n$ and any two columns in $\Mtr$ has no overlapping non-zero positions. Hence we have $\Mtr^T\Mtr = \frac{1}{n} I_k$ so that $m=k$ in Lemma~\ref{lemma:w-asym} and $VV^T = I$. Applying  Lemma~\ref{lemma:w-asym} we have $w_1^\infty = w_1^0$ and $w_2^\infty = n \Mtr w_1^0$. Then for any $x = e_l$ we have 
\[
y f_{w^\infty}(x) = {w_1^\infty}^T M_{x,y}^T w_2^\infty = n {w_1^0}^T A_{x}^T\Mtr w_1^0 \,.
\]
For $x$ to be correctly classified we need $y f_{w^\infty}(x) >0$. We will show that this is guaranteed only when $l\in \Str$, i.e. $x$ or $-x \in \Dtr$.

Since for any $l, l'\in \Str$, $|l-l'|\ge 2k$ we know that there exist at most one $l'\in \Str$ such that $|l-l'|<k$. 

If there does not exist such $l'$ then $A_{x}^T\Mtr=0$ and $y f_{w^\infty}(x) =0$, which means $x$ is classified randomly.

If there exists a unique $l'$ such that $|l-l'|<k$ and let $s = |l-l'|$, we have that
\[ 
y f_{w^\infty}(x) = n {w_1^0}^T A_{x}^T\Mtr w_1^0 = \sum_{i=1}^{k-s} w_{1,i}^0 w_{1, i+s}^0 \,.
\]

When $l\in \Str$, which means $s = 0$, we have $ y f_{w^\infty}(x) = {w_1^0}^T w_1^0 > 0$ when $w_1^0\ne 0$ (which holds almost surely).

When $0<s<k$ it is not guaranteed that $\sum_{i=1}^{k-s} w_{1,i}^0 w_{1, i+s}^0 > 0$ under $w_1^0 \sim \normal(0, b^2I)$. Actually we can show that the distribution of this quantity is symmetric around $0$: For any $s$ we can draw a graph with $k$ nodes and every $(i, i+s)$ forms an edge. This graph contains $s$ independent chains so we can choose a set of nodes $S\subset \iset{k}$   such that for any edge exactly one of the two nodes is contained in $S$. Now for any $w_1^0$ if we flip the sign at the positions that belong to $S$ then the sign of $\sum_{i=1}^{k-s} w_{1,i}^0 w_{1, i+s}^0$ is also flipped. With $w_1^0 \sim \normal(0, b^2I)$ this indicates that  $P(\sum_{i=1}^{k-s} w_{1,i}^0 w_{1, i+s}^0 > 0)=1/2$.

Now we have shown that, under the condition in this proposition, a data sample is correctly classified by Conv-$k$ with $w^\infty$ if and only if this sample appears in the training set. Otherwise it has only a half change to be correctly classified. This generalization behavior is exactly the same as Model-$1$-Layer in Task-Cls, which concludes the proof.

\end{proof}

\section{A Supporting Evidence for Interpreting Conv-Filters as a Data Adaptive Bias}

We have shown that, different from typical regularizations, the bias itself may require some samples to be built up (see Figure~\ref{fig:ana2}). We conjecture that convolution layer adds a data adaptive bias: The set of possible filters forms a set of biases. With a few number of samples gradient descent is able to figure out which bias(filter) is more suitable for the dataset. Then the identified bias can play as a prior knowledge to reduce the sample complexity. We provide another evidence for this:  Let the dataset contains all $e_l, l\in \iset{d}$ while $y_{e_l}=+1$ if $l$ is odd and $-1$ is $l$ is even. Model-Conv-$k$ is still able to outperform Model-$1$-Layer on this task (see Figure~\ref{fig:gcomp4}). We observe that the sign of the learned filter looks like (+, -, +, -, ...) in contrast to the ones learned in our three tasks, which are likely to be all positive or all negative. This indicates that, besides spatial shifting invariance, jointly training the convolutional filter can exploit a broader set of structures and be adaptive to different data distributions. 

\begin{figure}[ht]
\centering
\includegraphics[width=0.5\columnwidth]{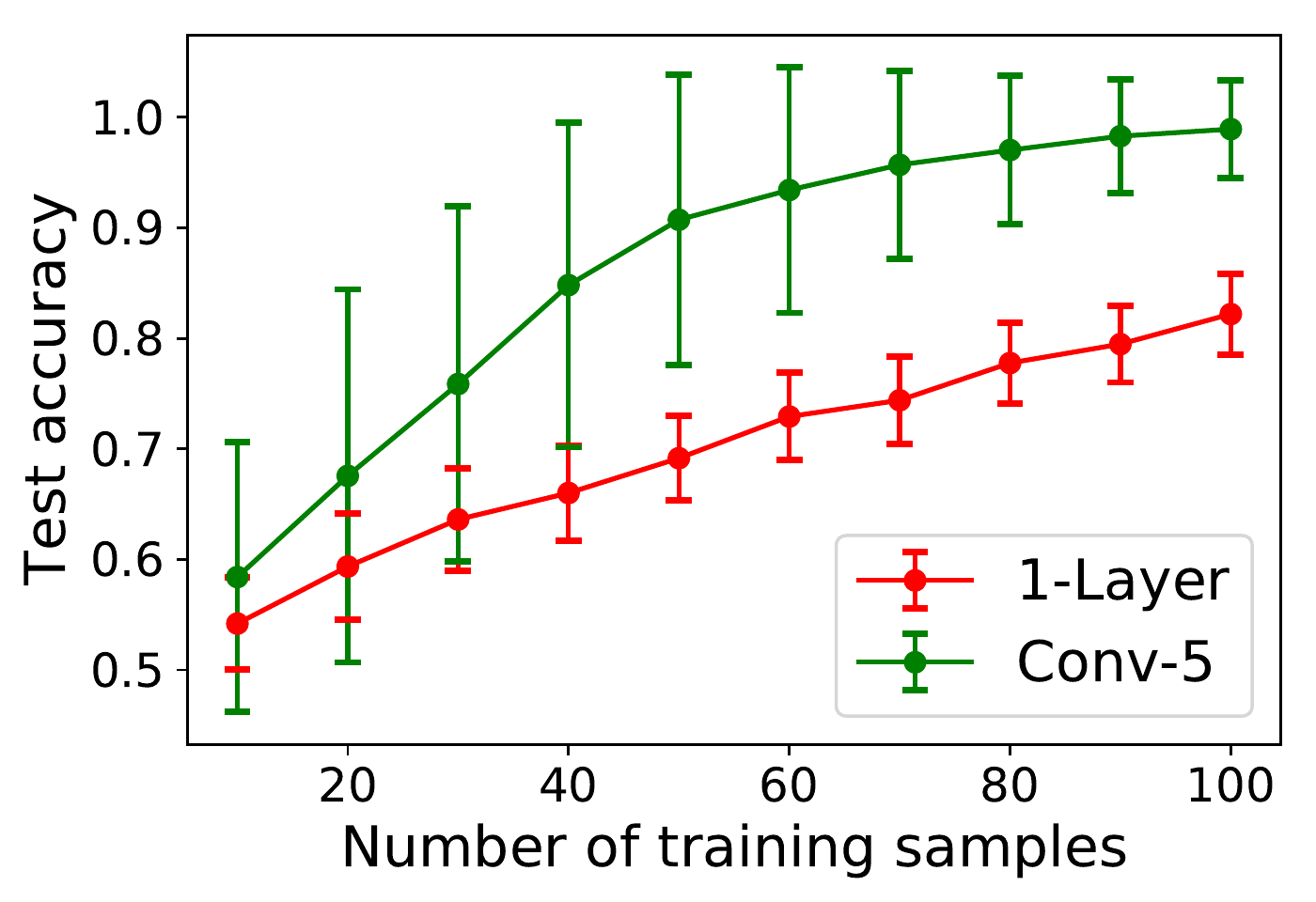}
\caption{
Classifying even v.s. odd non-zero position. Settings are the same as in Figure~\ref{fig:gcomp}. 
}
\label{fig:gcomp4}
\end{figure}

\section{Correlation Between Normal-hinge and X-hinge under Different Initializations}

Figure~\ref{fig:winit-2} and \ref{fig:winit-3} shows the variance introduced by weight initialization is also strongly correlated under two losses in Task-1stCtrl and Task-3rdCtrl. Figure~\ref{fig:winit-3-1} looks a bit different from the other two tasks because the extreme hinge loss is biased and $w^{\infty}$ may not able to separate the training samples in Task-3rdCtrl. But the strong correlation between the normal hinge loss and the extreme hinge loss under different weight initializations still holds.  

\begin{figure}[ht]
\vskip -0.1in
\centering
\subfigure[Convergence with $t$.]{
\includegraphics[width=0.45\columnwidth]{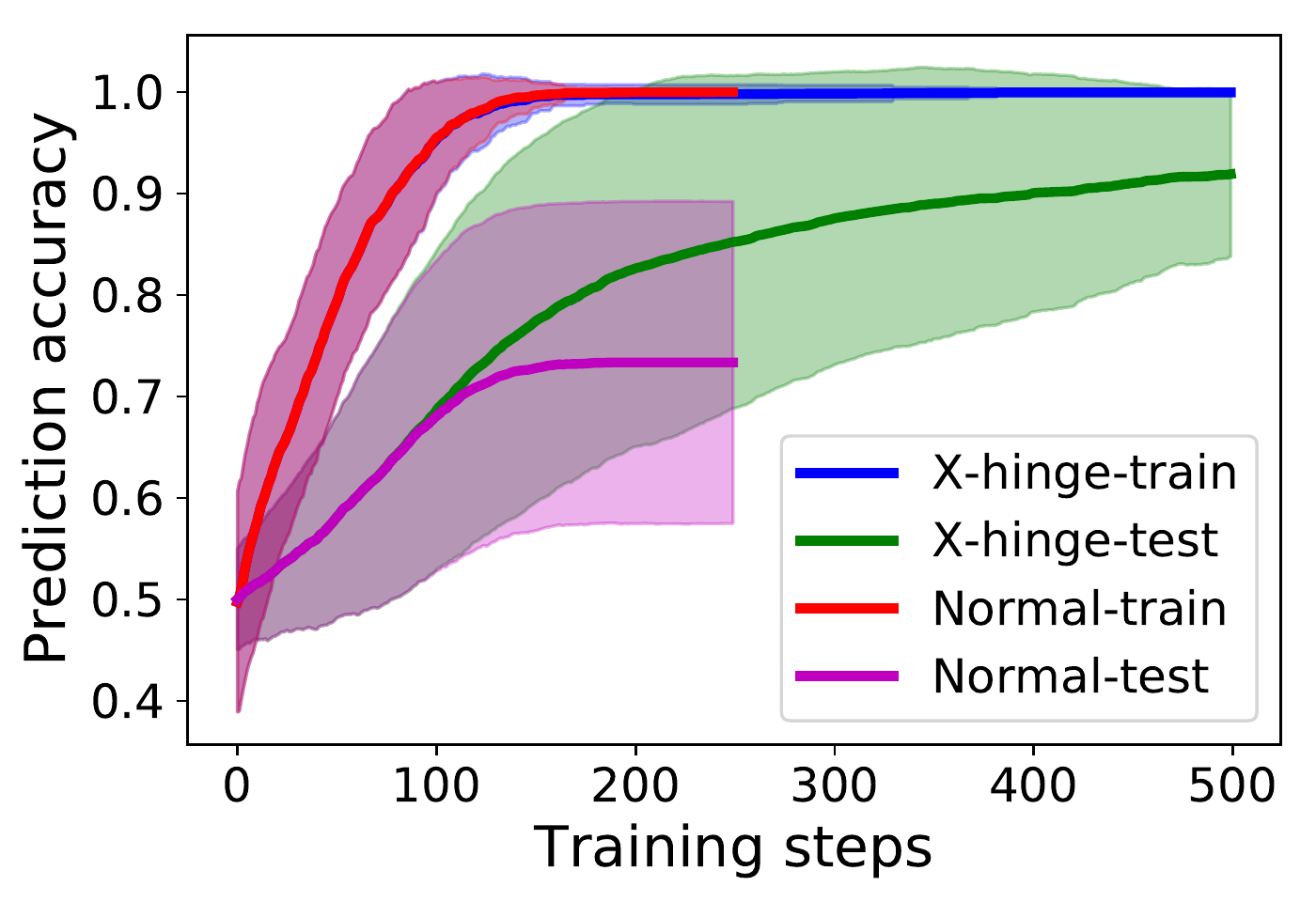}
\label{fig:winit-2-1}
}
\subfigure[Correlation at $t=150$.]{
\includegraphics[width=0.45\columnwidth]{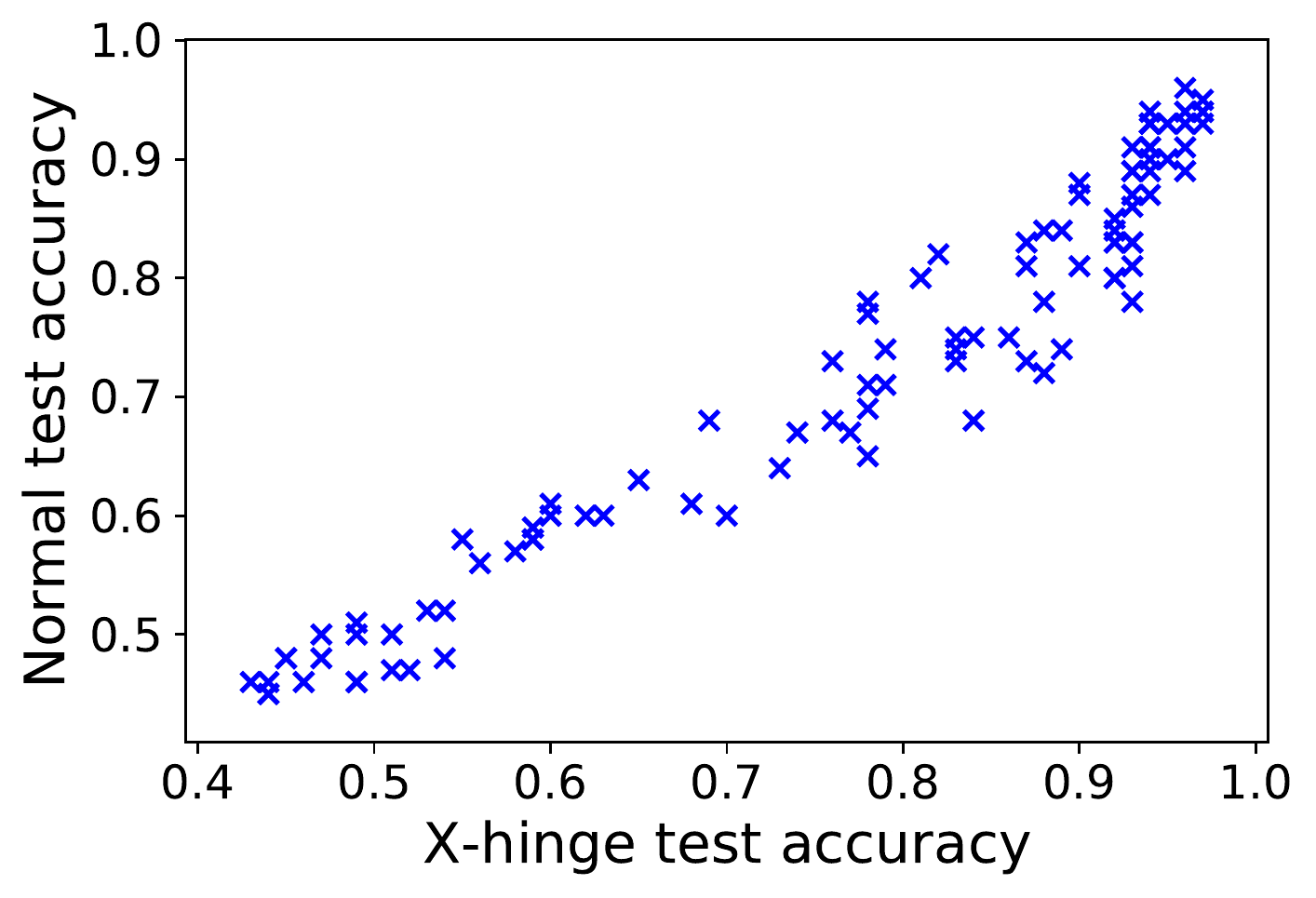}
\label{fig:winit-2-2}
}
\caption{
The effect of weight initialization in Task-1stCtrl. We fix $d=100$, $n=30$ and train Model-Conv-$k$ with $100$ different random initializations using both losses.
}
\label{fig:winit-2}
\vskip -0.1in
\end{figure}
\begin{figure}[ht]
\vskip -0.1in
\centering
\subfigure[Convergence with $t$.]{
\includegraphics[width=0.45\columnwidth]{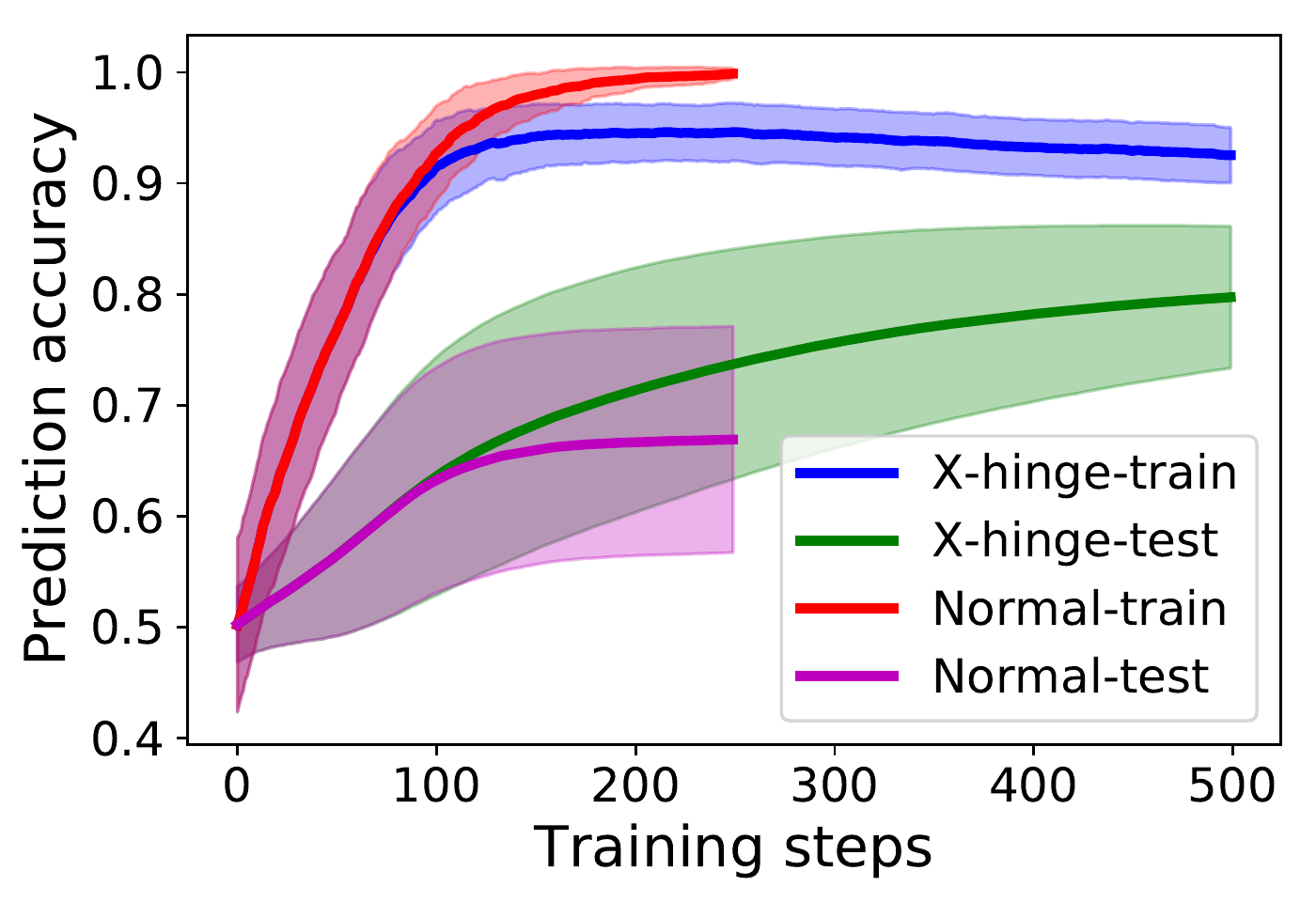}
\label{fig:winit-3-1}
}
\subfigure[Correlation at $t=150$.]{
\includegraphics[width=0.45\columnwidth]{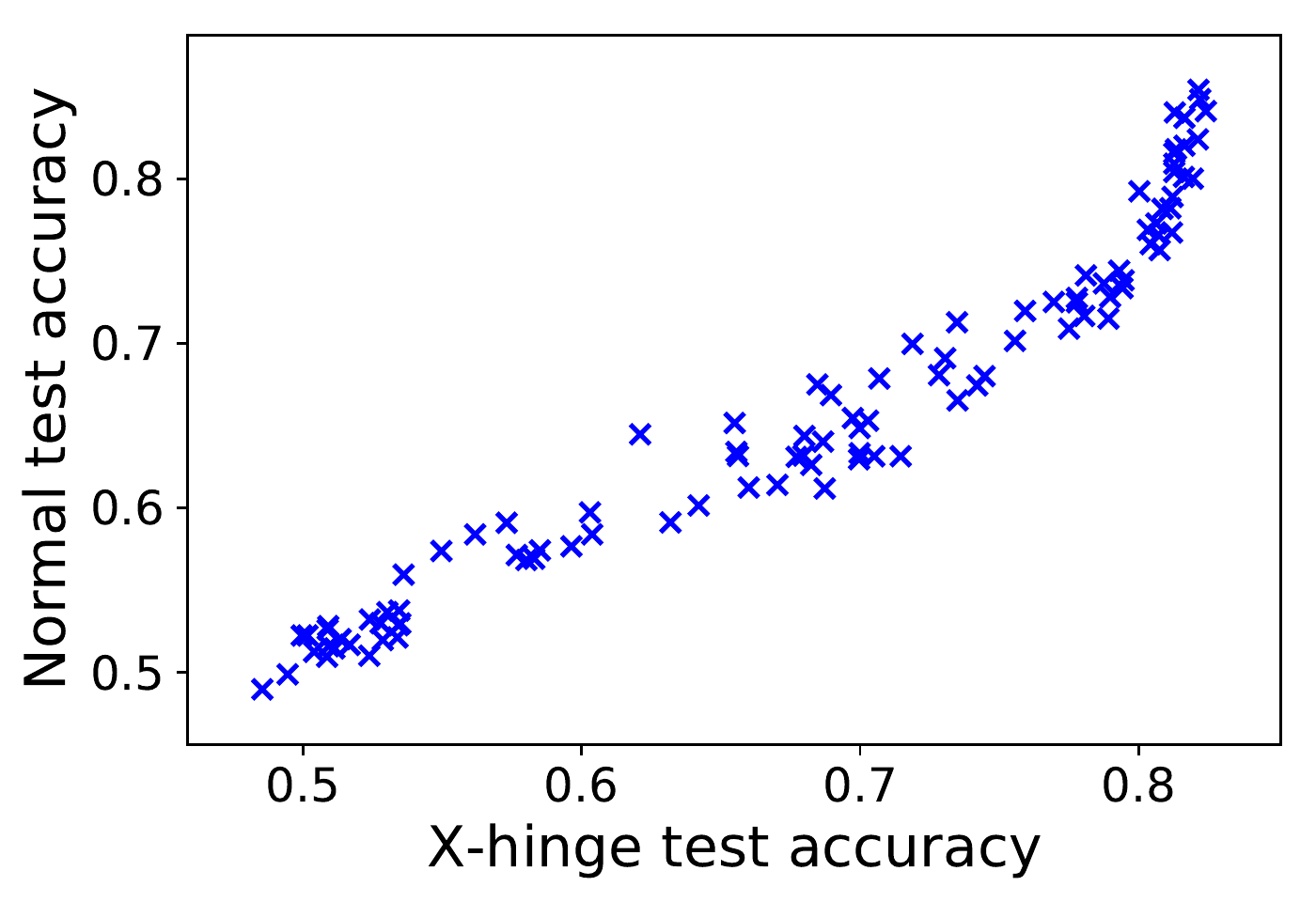}
\label{fig:winit-3-2}
}
\caption{
The effect of weight initialization in Task-3stCtrl. We fix $d=100$, $n=50$ and train Model-Conv-$k$ with $100$ different random initializations using both losses.
}
\label{fig:winit-3}
\vskip -0.1in
\end{figure}

\section{The bias of X-Hinge in Task-3rdCtrl and Potential Practical Indications}

In Figure~\ref{fig:winit-3-1} we observe that running gradient descent may not be able to achieve $0$ training error even if the samples are linearly separable. To explain this, simply consider a training set with 3 samples and $k=1, d=4$: $x_1 = [-1, 1, 0, 0], x_2 = [0, -1, 1, 0], x_3=[0,0,-1,1]$. All labels are positive. Then $\Mtr=[-1/3, 0, 0, 1/3]$. If we optimize the X-hinge loss then the network has no intent to classify $x_2$ correctly. 

Notice that in Figure~\ref{fig:winit-3-1}, under X-hinge, the generalization performance is still improving even after the training accuracy starts to decrease.  We conjecture that this indicates a new way of interpreting the role of regularization in deep nets. On real datasets we typically use sigmoid with cross entropy loss which can be viewed and a smoothed version of the hinge loss. We say a data sample is \emph{active} during training if $yf(x)$ is small so that the gradient for fitting $(x, y)$ is salient since it is not well fit yet. With X-hinge all samples are ``equality active". One message delivered by our observation is that having more samples to be ``active'' during training will make convolution filters have better generalization property, but may hurt with training data fitting. In practice we cannot recommend using X-hinge loss since the network will fail to fit the training set if we keep \emph{all} samples to be equally ``active''. But we can view this as a trade off when using logistic loss: keeping more samples to be ``active" during training with gradient descent will help with some generalization property (e.g. better Conv filters) but cause underfitting. For regularization we may want to keep as many samples to be active as possible while still be able to fit the training samples. This provides a new view of the role of regularization: Taking weight norm regularization as an example, traditional interpretation is that controlling the weight norm will reduce the capacity of neural nets, which may not be sufficient to explain non-overfitting in very large nets. The new potential interpretation is that, if we keep the weight norm to be small during training, the training samples are more ``active" during gradient descent so that better convolution filters can be learned for generalization purposes.  Verifying this conjecture on real datasets will be an interesting future direction.

\end{document}